\newcommand{\eg}{{\it e.g.}}
\newcommand{\ie}{{\it i.e.}}
\newcommand{\BA}{\begin{array}}
\newcommand{\EA}{\end{array}}
\newcommand{\BIT}{\begin{itemize}}
\newcommand{\EIT}{\end{itemize}}
\newcommand{\dom}{\mathop{\bf dom}}
\newcommand{\ones}{\mathbf 1}
\newcommand{\reals}{{\mathbb{R}}} %
\newcommand{\Expect}{\mathbb{E}}
\newcommand{\Prob}{\mathbb{P}}
\newcommand{\var}{{\textstyle \mathop{\bf var{}}}}
\newcommand{\argmin}{\mathop{\rm argmin}}
\newcommand{\argmax}{\mathop{\rm argmax}}
\newcommand{\x}{{\pi}}
\newcommand{\y}{{\lambda}}
\newcommand{\xdim}{A}
\newcommand{\ydim}{m}
\newcommand{\Lc}{\mathcal{L}}
\newcommand{\Fc}{\mathcal{F}}
\newcommand{\Nc}{\mathcal{N}}
\newcommand{\Nat}{\mathbb{N}}
\newcommand{\cgf}{\Psi}
\newcommand{\saddle}{\mathcal{L}}
\newcommand{\VBOSfn}{\mathcal{G}}
\newcommand{\thompsonset}{\mathcal{P}}
\newcommand{\irate}[2]{(\cgf_{#1}^{#2*})^{-1}}
\DeclarePairedDelimiterX{\infdivx}[2]{(}{)}{%
  #1\;\delimsize|\delimsize|\;#2%
}
\newcommand{\bayesregret}{{\rm BayesRegret}}
\newcommand{\regret}{{\rm Regret}}
\newtheorem{definition}{Definition}
\title{Variational Bayesian Optimistic Sampling}
\author{%
 Brendan O'Donoghue \\
  DeepMind\\
  \texttt{bodonoghue@deepmind.com} \\
  \And
  Tor Lattimore \\
  DeepMind\\
  \texttt{lattimore@deepmind.com}
}
\begin{document}
\maketitle

\begin{abstract}
We consider online sequential decision problems where an agent must balance exploration and exploitation. We derive a set of Bayesian `optimistic' policies which, in the stochastic multi-armed bandit case, includes the Thompson sampling policy. We provide a new analysis showing that any algorithm producing policies in the optimistic set enjoys $\tilde O(\sqrt{AT})$ Bayesian regret for a problem with $A$ actions after $T$ rounds. We extend the regret analysis for optimistic policies to bilinear saddle-point problems which include zero-sum matrix games and constrained bandits as special cases. In this case we show that Thompson sampling can produce policies outside of the optimistic set and suffer linear regret in some instances. Finding a policy inside the optimistic set amounts to solving a convex optimization problem and we call the resulting algorithm `variational Bayesian optimistic sampling' (VBOS). The procedure works for any posteriors, \ie, it does not require the posterior to have any special properties, such as log-concavity, unimodality, or smoothness. The variational view of the problem has many useful properties, including the ability to tune the exploration-exploitation tradeoff, add regularization, incorporate constraints, and linearly parameterize the policy.

\end{abstract}

\section{Introduction}

In this manuscript we consider online learning, in particular the multi-armed
stochastic bandit problem \cite{lattimore2018bandit} and its extension to bilinear saddle-point problems. Thompson sampling (TS) is a well-known Bayesian algorithm to tackle this problem
\cite{thompson1933likelihood, russo2018tutorial}.  At each iteration TS
samples an action according to the posterior probability that it is the
optimal action. However, it never computes these probabilities, instead
TS samples from the posterior of each arm and then acts greedily with respect to
that sample, which is an \emph{implicit} sample from the posterior probability
of optimality.
TS enjoys strong theoretical guarantees \cite{agrawal2012analysis, russo2018tutorial}
as well as excellent practical performance \cite{chapelle2011empirical}.  Moreover,
it has an elegant information-theoretic interpretation of an agent that is making a tradeoff between information accumulation and regret \cite{russo2016information}.  That being said it
does have several weaknesses. Firstly, TS requires samples from the true posterior,
which may be intractable for some problems. It is known that TS with even small
error in its posterior samples can suffer linear regret \cite{phan2019thompson}. In many cases
of interest the posterior distribution is highly intractable to sample from and running MCMC chains, one for each arm, can be wasteful and slow and still not yield precise enough samples for good performance, though recent work has attempted to improve this situation \cite{mazumdar2020thompson}.
Moreover, TS is unable to take into account other preferences, such as constraints or the presence of other agents.  In particular, it was recently shown that TS can suffer linear regret in two-player zero sum games \cite{o2020stochastic} and we shall demonstrate the same phenomenon in this work empirically for constrained bandits. Finally, when performing TS the user never has direct access to the posterior probability of optimality, only samples from that distribution (though the distribution can be estimated by taking many samples). In some practical applications having access to the policy is desirable, for example to ensure safety constraints or to allocate budgets.

In this paper we present `variational Bayesian optimistic sampling' (VBOS), a new Bayesian approach
to online learning. At every step the VBOS algorithm solves a convex optimization problem over the simplex. The solution to this problem is a policy that satisfies a particular `optimism' condition. The problem is always convex, no matter what form the posteriors take. The only requirement is the ability to compute or estimate the cumulant generating function of the posteriors. VBOS and TS can be considered as part of the same family of algorithms since, in the bandit case, they both produce policies inside the optimistic set. That being said, VBOS has several advantages over TS. First, in some cases estimating or bounding the cumulant generating function may be easier than sampling from the posterior. Second, incorporating additional requirements such as constraints, opponents or tuning the exploration exploitation tradeoff is easy in VBOS - just change the optimization problem. Finally, when using VBOS we always have direct access to the policy, rather than just implicit samples from it, as well as an explicit upper bound on the expected value of the problem, which TS does not give us. Here we list the contributions of this work.

\subsection{Contributions}
\begin{enumerate}
\item We derive a set of `optimistic' Bayesian policies for the stochastic multi-armed bandit that satisfy a particular optimism inequality and which includes the TS policy. 
\item We present a new, simple proof of a $\tilde O(\sqrt{AT})$ regret-bound for a bandit with $A$ arms after $T$ timesteps that covers any policy in the optimistic set ($\tilde O$ ignores logarithmic factors).
\item We derive \emph{variational Bayesian optimistic sampling} (VBOS), which is
a policy in the optimistic set that can be computed by solving a convex optimization problem.
\item We show that, unlike TS, the variational formulation easily
handles more complicated problems, such as bilinear saddle-point problems which include as special cases zero-sum two player games and constrained bandits.
\end{enumerate}

\subsection{Preliminaries}
A stochastic multi-armed bandit problem is a sequential learning problem
in which an agent interacts with an environment in order to maximize
its total cumulative reward. Initially the agent does not know how the rewards
are distributed and must learn about them from experience.
In each round $t \in \Nat$ the agent selects an action $a_t \in \{1, \ldots, A \}$
and receives a reward
$r_t = \mu_{a_t} + \eta^t_{a_t}$
where $\eta^t_{a_t} \in \reals$ is zero-mean noise associated with action $a_t$ at time $t$. 
We define $\Fc_t = (a_1, r_1, \ldots, a_{t-1}, r_{t-1})$ to be the sequence of actions and rewards observed by the agent prior to round $t$, and as shorthand we shall use the notation $\Expect^t
(\cdot) = \Expect(~\cdot \mid \Fc_t)$.

An algorithm ${\rm alg}$ is a measurable mapping that takes the entire history at time $t$
and produces a probability distribution, the \emph{policy}, from which the agent samples its action.
In order to assess the quality of an algorithm ${\rm alg}$ we consider the
\textit{regret}, or shortfall in cumulative rewards relative to the optimal
value,
\begin{equation}
  \label{eq:regret}
  \regret(\mu, {\rm alg}, T) = \Expect_{\eta, {\rm alg}} \left[ \sum_{t=1}^T \max_i
  \mu_i - r_t \right],
\end{equation}
where $\Expect_{\eta, {\rm alg}}$ denotes the expectation with respect to the 
noise process $\eta$ and any randomness in the algorithm.
This quantity \eqref{eq:regret} depends on the unknown expected reward $\mu \in
\reals^A$, which is fixed at the start of play and kept the same throughout.
To assess the quality of learning algorithms designed to work across some
\textit{family} $\mathcal{M}$ of problems we define
\begin{equation}
  \label{eq:bayes-regret}
  \bayesregret(\phi, {\rm alg}, T) = \Expect_{\mu \sim \phi} \regret(\mu, {\rm alg}, T)
\end{equation}
where $\phi$ is a prior probability measure over $\mu \in \mathcal{M}$ that assigns
relative importance to each problem instance and which we assume is known to the algorithm.

\section{Thompson sampling and a variational approach}
The algorithm that minimizes the Bayesian regret \eqref{eq:bayes-regret} is called the \emph{Bayes-optimal} policy, but it is believed to be intractable to compute in most cases \cite{ghavamzadeh2015bayesian}.
With that in mind we would like an algorithm that is computationally tractable
and still guarantees a good bound on the Bayesian regret. 
TS is a Bayesian
algorithm that at each time step
samples possible values of each arm from their respective posteriors and acts
greedily with respect to those samples; see Algorithm~\ref{alg-ts}. This simple strategy
provides a principled approach to the exploration-exploitation tradeoff in the stochastic multi-armed bandit problem.
In this section 
we derive a set of `optimistic' policies, which includes
the TS policy, and we shall show that any algorithm
that produces policies in this set satisfies a guaranteed Bayesian regret bound.
For brevity we defer all proofs to the appendices.

\subsection{Bounding the conditional expectation}
The Bayesian regret \eqref{eq:bayes-regret} depends on the expectation of the maximum over
a collection of random variables. Here we shall derive an upper bound on this quantity, starting
with a generic upper bound on the conditional expectation.

If $X:\Omega \rightarrow \reals^d$ is a random variable in $L^d_1$ (\ie, $\Expect |X_i| < \infty$ for each $i=1, \ldots, d$) then we denote by $\cgf_X:\reals^d\rightarrow\reals \cup \{ \infty \}$ the cumulant generating function of $X - \Expect X$, \ie,
\[
  \cgf_X(\beta) = \log \Expect \exp(\beta^\top(X - \Expect X)).
\]
This function is always convex, and we shall assume throughout this manuscript that it is also closed and proper for any random variables we encounter. With this definition we can present a bound on the conditional expectation.
\begin{restatable}{lem}{lcondexp}
\label{l-condexp1}
Let $X:\Omega \rightarrow \reals$ be a random variable on $(\Omega, \Fc, \Prob)$ satisfying $X \in L_1$, and let $A \in \Fc$ be an event with $\Prob(A) > 0$. Then, %
for any $\tau \geq 0$
\begin{equation}
\label{e-condexp}
  \Expect [X | A] \leq \Expect X + \tau \cgf_X(1/\tau) -\tau \log \Prob(A).
\end{equation}
\end{restatable}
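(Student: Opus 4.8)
The plan is to reduce the statement to a bound on the centred variable and then apply Jensen's inequality together with the crude estimate $\mathbf{1}_A \le 1$. Write $Y = X - \Expect X$, so that $Y \in L_1$ with $\Expect Y = 0$ and $\cgf_X(\beta) = \log \Expect \exp(\beta Y)$. Since $\Expect[X \mid A] = \Expect X + \Expect[Y \mid A]$, the claim \eqref{e-condexp} is equivalent to $\Expect[Y \mid A] \le \tau \cgf_X(1/\tau) - \tau \log \Prob(A)$, and it suffices to establish this.

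First I would dispose of the degenerate cases. If $\tau > 0$ but $\cgf_X(1/\tau) = +\infty$, the right-hand side is $+\infty$ and there is nothing to prove, so I may assume $\cgf_X(1/\tau) < \infty$, i.e.\ that $\exp(Y/\tau)$ is integrable. The boundary value $\tau = 0$ is handled by reading $\tau \cgf_X(1/\tau)$ as the perspective of $\cgf_X$ at the origin, namely the recession value $\lim_{\beta \to \infty} \cgf_X(\beta)/\beta = \operatorname{esssup} Y$; the inequality then collapses to the trivial $\Expect[Y \mid A] \le \operatorname{esssup} Y$, so the substance lies entirely in the case $\tau > 0$.

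For $\tau > 0$ set $\beta = 1/\tau$ and introduce the conditional measure $Q(\cdot) = \Prob(\cdot \mid A)$, which is absolutely continuous with respect to $\Prob$ with density $dQ/d\Prob = \mathbf{1}_A/\Prob(A)$, so that $\Expect[Y \mid A] = \Expect_Q Y$. The key computation is then a two-line chain: by Jensen's inequality applied to the concave map $\log$,
\[
\beta \, \Expect_Q Y = \Expect_Q\bigl[\log \exp(\beta Y)\bigr] \le \log \Expect_Q \exp(\beta Y) = \log\!\left( \frac{1}{\Prob(A)} \Expect\bigl[\mathbf{1}_A \exp(\beta Y)\bigr] \right),
\]
and then, bounding $\mathbf{1}_A \le 1$ while using $\exp(\beta Y) \ge 0$, the bracketed expectation is at most $\Expect \exp(\beta Y) = \exp(\cgf_X(\beta))$. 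Hence $\beta \, \Expect_Q Y \le \cgf_X(\beta) - \log \Prob(A)$; dividing by $\beta$, substituting $\beta = 1/\tau$, and adding $\Expect X$ yields \eqref{e-condexp}.

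The argument is a change-of-measure bound in disguise, essentially the Donsker--Varadhan inequality specialized to $Q = \Prob(\cdot \mid A)$: the step $\mathbf{1}_A \le 1$ is exactly what absorbs the relative-entropy term $-\log \Prob(A) = \kld{Q}{\Prob}$. Consequently I do not expect a genuine obstacle beyond bookkeeping. The only points demanding care are the measure-theoretic well-definedness of $\Expect_Q Y$ (secured by $Y \in L_1$ and $Q \ll \Prob$) and the correct reading of the $\tau = 0$ endpoint as a recession limit; the whole content of the lemma is the short Jensen estimate above.
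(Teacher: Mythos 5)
Your proof is correct and is essentially the paper's own argument: the paper likewise writes $\Expect[X \mid A]$ as an expectation under the conditional measure $\Prob(\cdot \mid A)$, applies Jensen's inequality to the concave $\log$, and then enlarges the integral from $A$ to $\Omega$ (your step $\mathbf{1}_A \le 1$), the only cosmetic difference being that it works with the uncentered variable rather than $Y = X - \Expect X$. Your explicit handling of the $\tau = 0$ endpoint and the $\cgf_X(1/\tau) = +\infty$ case is a minor tightening of details the paper leaves implicit.
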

The right hand side of \eqref{e-condexp} involves the perspective of $\cgf_X$ and so is convex in $\tau$ \cite{boyd2004convex} and we can minimize over $\tau \geq 0$ to find the tightest bound, which we do next. Recall that the \emph{convex conjugate} of function $f:\reals^d \rightarrow \reals$ is denoted $f^*:\reals^d \rightarrow \reals$ and is given by $f^*(y) = \sup_x \left( x^\top y - f(x)\right)$ \cite{boyd2004convex}.
\begin{restatable}{thm}{thmirate}
\label{thm-irate}
Let $X:\Omega\rightarrow\reals$ be a random variable
such that the interior of the domain of $\cgf_X$ is non-empty, then
under the same assumptions as Lemma~\ref{l-condexp1} we have \footnote{If $\cgf_X = 0$, then we take $\irate{X}{} = 0$.}
\[
  \Expect[X | A] \leq \Expect X + \irate{X}{}(-\log \Prob(A)).
\]
\end{restatable}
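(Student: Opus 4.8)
The plan is to optimize the free parameter in Lemma~\ref{l-condexp1}. That bound holds for every $\tau \ge 0$, and only the term $\tau\cgf_X(1/\tau) - \tau\log\Prob(A)$ depends on $\tau$, so taking the infimum over $\tau$ yields the tightest conclusion
\[
  \Expect[X \mid A] \le \Expect X + \inf_{\tau > 0}\left(\tau\cgf_X(1/\tau) - \tau\log\Prob(A)\right),
\]
with the $\tau = 0$ endpoint recovered in the limit. The map $\tau\mapsto\tau\cgf_X(1/\tau)$ is the perspective of the convex function $\cgf_X$, so the objective is convex in $\tau$ and the infimum is well posed. Substituting $\beta = 1/\tau$, which sweeps $(0,\infty)$ as $\tau$ does, and writing $p = -\log\Prob(A) \ge 0$, the theorem reduces to the single identity
\[
  \inf_{\beta > 0}\frac{\cgf_X(\beta) + p}{\beta} = \irate{X}{}(p).
\]

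To establish this, I would let $y^\star = \irate{X}{}(p)$ denote the point with $\cgf_X^*(y^\star) = p$, and call the left-hand infimum $v$. The inequality $v \ge y^\star$ is immediate from the Fenchel--Young inequality: by definition of the conjugate, $\beta y^\star - \cgf_X(\beta) \le \cgf_X^*(y^\star) = p$ for every $\beta$, and dividing the $\beta > 0$ instances by $\beta$ rearranges to $y^\star \le (\cgf_X(\beta) + p)/\beta$; taking the infimum over $\beta>0$ gives $y^\star \le v$. For the reverse inequality I would invoke attainment: since the interior of $\dom\cgf_X$ is non-empty, the supremum defining $\cgf_X^*(y^\star) = \sup_\beta\left(\beta y^\star - \cgf_X(\beta)\right)$ is attained at some maximizer $\beta^\star$, at which $\beta^\star y^\star - \cgf_X(\beta^\star) = p$, so that $(\cgf_X(\beta^\star) + p)/\beta^\star = y^\star \ge v$, hence $v \le y^\star$ — provided $\beta^\star > 0$.

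It then remains to argue that the relevant maximizer is strictly positive and that $\irate{X}{}$ is single valued at $p$, which is where the hypotheses do their work. Jensen's inequality gives $\cgf_X(\beta) = \log\Expect\exp(\beta(X - \Expect X)) \ge 0$ with $\cgf_X(0) = 0$ and $\cgf_X'(0) = 0$, so $\cgf_X^*$ is nonnegative, vanishes at $0$, and is nondecreasing on $[0,\infty)$; thus the inverse is taken on this increasing branch, where for $p \ge 0$ the defining supremum is attained at some $\beta^\star \ge 0$. The only delicate points are the boundary case $\beta^\star = 0$, which forces $p = 0$ (equivalently $\Prob(A) = 1$) and is checked directly to give $v = 0 = y^\star$ using $\cgf_X'(0) = 0$, and the degenerate case $\cgf_X \equiv 0$ (an almost-surely constant $X$), which is handled by the footnote convention $\irate{X}{} = 0$. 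I expect this monotonicity-and-attainment bookkeeping — ensuring $\irate{X}{}$ is a genuine inverse and that the optimizing $\beta$ lies in $(0,\infty)$ — to be the main obstacle, whereas the two inequalities themselves are short.
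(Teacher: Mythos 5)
Your route is the same as the paper's: apply Lemma~\ref{l-condexp1}, optimize over the free parameter, and reduce everything to the identity $\inf_{\beta>0}\,(\cgf_X(\beta)+p)/\beta = \irate{X}{}(p)$; the paper isolates precisely this identity as Lemma~\ref{l-conjpersp} and proves it the same way (its equality case of Fenchel--Young at $1/\tau = (\cgf_X^*)'(p)$ is your maximizer $\beta^\star$). The gap is in your attainment step. Note first that the theorem only needs the direction $v \le y^\star$ --- the Fenchel--Young direction $v \ge y^\star$ just certifies that the bound is not better than the inverse rate function --- so the half of your argument that matters is exactly the half resting on the claim that non-empty interior of $\dom \cgf_X$ forces the supremum defining $\cgf_X^*(y^\star)$ to be attained. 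That claim is false. Take $X$ Bernoulli$(q)$, so $\dom\cgf_X = \reals$; at $y^\star = 1-q$, the essential supremum of $X - \Expect X$, one computes $\beta y^\star - \cgf_X(\beta) = -\log\bigl(q + (1-q)e^{-\beta}\bigr)$, which increases strictly to $\cgf_X^*(y^\star) = -\log q$ and is never attained: the ``maximizer'' sits at $\beta = \infty$. This is not an avoidable corner: it is exactly the configuration the theorem must cover when $A = \{X=1\}$ and $p = -\log q$, where the stated bound holds with equality ($\Expect[X \mid A] = 1 = q + (1-q)$). Your boundary bookkeeping treats $\beta^\star = 0$ and $\cgf_X \equiv 0$ but misses this $\beta^\star = \infty$ case, which for bounded random variables is the generic tight case.

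The fix is cheap and preserves your structure: replace the maximizer by a maximizing sequence. For $p>0$ (so that $y^\star > 0$ on the increasing branch you set up), pick $\beta_k$ with $\delta_k \coloneqq p - \bigl(\beta_k y^\star - \cgf_X(\beta_k)\bigr) \to 0$, noting $\delta_k \ge 0$ by definition of the supremum. Since $\cgf_X \ge 0$ (Jensen), $\beta_k y^\star \ge p - \delta_k$, so $\beta_k \ge (p-\delta_k)/y^\star$ is eventually bounded away from zero, hence $(\cgf_X(\beta_k)+p)/\beta_k = y^\star + \delta_k/\beta_k \to y^\star$, which yields $v \le y^\star$ with no attainment needed. (The paper's alternative is to allow $\tau = 0$ in the infimum, interpreting the perspective $\tau\cgf_X(1/\tau)$ there by its recession limit, and to invoke the Legendre-type property of $\cgf_X$ so that equality is attained at $\tau = 1/(\cgf_X^*)'(p) \in [0,\infty)$; your sequence argument, once inserted, is arguably cleaner since it avoids that machinery and the citation it requires.)
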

For example, if $X \sim \mathcal{N}(0, \sigma^2)$, then for any event $A$ we 
can bound the conditional expectation as $\Expect[X | A] \leq \sigma \sqrt{-2\log \Prob(A)}$, no matter how $X$ and $A$ are related.
This inequality has clear connections to Kullback's inequality \cite{fuchs1970inegalite} and the Donsker-Varadhan variational representation
of KL-divergence \cite[Thm.~3.2]{gray2011entropy}. The function $\cgf^*$ is referred to as the \emph{Cram\'{e}r function} or the \emph{rate function} in large deviations theory \cite{varadhan1984large}. The inverse of the rate function comes up in several contexts, such as queuing theory and calculating optimal insurance premiums \cite{lewis1997introduction}.
Cram\'{e}r's theorem tells us that the probability of a large deviation of the empirical average of a collection  of IID random variables decays exponentially with more data, and that the decay constant is the rate function. It is no surprise then that the regret bound we derive will depend on the inverse of the rate function, \ie, when the tails decay faster we incur less regret.

\subsection{The optimistic set}
Using Theorem \ref{thm-irate} we next derive the maximal inequality we shall use to bound the regret.
\begin{restatable}{lem}{lexpmax}
\label{l-exp_max}
Let $\mu: \Omega\rightarrow\reals^A$, $\mu \in L^A_1$, be a random variable, 
let $i^\star = \argmax_i \mu_i$ (ties broken arbitrarily) and denote by $\cgf_i \coloneqq \cgf_{\mu_i}$, then
\begin{align*}
  \Expect \max_i \mu_i \leq \sum_{i=1}^A \Prob(i^\star = i) \left( \Expect \mu_i +
  \irate{i}{}(-\log \Prob(i^\star = i)) \right).
\end{align*}
\end{restatable}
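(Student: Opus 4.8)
The plan is to split the expectation of the maximum across the events on which each arm attains the maximum, and then apply Theorem~\ref{thm-irate} to each piece. Since ties are broken arbitrarily, $i^\star$ takes a single value in $\{1,\dots,A\}$ at each outcome, so the events $\{i^\star = i\}$, $i=1,\dots,A$, form a partition of $\Omega$, and on the event $\{i^\star = i\}$ we have $\max_j \mu_j = \mu_i$. By the law of total expectation,
\[
  \Expect \max_i \mu_i = \Expect \mu_{i^\star} = \sum_{i=1}^A \Prob(i^\star = i)\,\Expect[\mu_i \mid i^\star = i],
\]
where I may restrict the sum to those indices with $\Prob(i^\star = i) > 0$.

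The key step is then to bound each conditional expectation. For every $i$ with $\Prob(i^\star = i) > 0$, I would apply Theorem~\ref{thm-irate} with the random variable $X = \mu_i$ and the event $A = \{i^\star = i\}$. The hypotheses hold because $\mu \in L^A_1$ gives $\mu_i \in L_1$, and the standing assumption that each $\cgf_i$ is closed and proper with nonempty interior of its domain is exactly what Theorem~\ref{thm-irate} requires. This yields
\[
  \Expect[\mu_i \mid i^\star = i] \le \Expect \mu_i + \irate{i}{}(-\log \Prob(i^\star = i)).
\]
Multiplying by $\Prob(i^\star = i) \ge 0$ (which preserves the inequality) and summing over $i$ gives exactly the claimed bound.

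The remaining care is purely bookkeeping around degenerate arms. If $\Prob(i^\star = i) = 0$ for some $i$, then $-\log \Prob(i^\star = i) = +\infty$, but the corresponding summand on the right is $0 \cdot \irate{i}{}(+\infty)$, which I interpret as $0$ under the convention $0\cdot\infty = 0$; the matching summand on the left, $\Prob(i^\star = i)\,\Expect[\mu_i \mid i^\star = i]$, is likewise $0$. Hence both the decomposition and the bound extend from the restricted sum to the full sum over $i = 1,\dots,A$, completing the argument. I do not expect any genuine obstacle here: the only subtlety is ensuring the partition identity $\max_j \mu_j = \mu_{i^\star}$ is applied \emph{before} conditioning, so that Theorem~\ref{thm-irate} can be invoked arm-by-arm with the correct event, and correctly handling the degenerate $\Prob(i^\star = i)=0$ terms via the stated convention.
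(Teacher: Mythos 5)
Your proof is correct and follows essentially the same route as the paper's: decompose $\Expect \max_i \mu_i = \sum_{i=1}^A \Prob(i^\star = i)\,\Expect[\mu_i \mid i^\star = i]$ via the partition $\{i^\star = i\}$ and apply Theorem~\ref{thm-irate} term by term. The only difference is that you spell out the bookkeeping for zero-probability events, which the paper's one-line proof leaves implicit.
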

Note that Lemma \ref{l-exp_max} makes no assumption of independence of each entry of $\mu$, the bound holds even in the case of dependency. 

In the context of online learning $\pi_i^\mathrm{TS} = \Prob(i^\star = i)$ is exactly the TS policy when $\mu_i$ have the laws of the posterior.
For ease of notation we introduce the `optimism' map $\VBOSfn^t_\phi: \Delta_A \rightarrow \reals$ which for a random variable $\mu : \Omega \rightarrow \reals^A$ distributed according to $\phi(\cdot \mid \Fc_t)$ is given by
\begin{align}
\VBOSfn^t_\phi(\pi) &\coloneqq \sum_{i=1}^A \pi_i \left( \Expect^t \mu_i + \irate{i}{t}(-\log \pi_i) \right),
\end{align}
where $\Delta_A$ denotes the probability simplex of dimension $A - 1$ and $\cgf^t_i = \cgf_{\mu_i \mid \Fc_t}$. Note that $\VBOSfn^t_\phi$ is concave since $\cgf^t_i$ is convex \cite{boyd2004convex}. Moreover, if the random variables are non-degenerate then
$\VBOSfn^t_\phi$ is differentiable and the gradient can be computed using Danskin's theorem \cite{danskin2012theory}.

With this notation we can write $\Expect^t \max_i \mu_i \leq \VBOSfn^t_\phi(\pi^\mathrm{TS})$. This brings us to the `optimistic' set of probability distributions, which corresponds to the set of policies that satisfy that same inequality.
\begin{definition}
Let $\mu: \Omega \rightarrow \reals^A$ be a random variable distributed according to $\phi(\cdot \mid \Fc_t)$, then we define the optimistic set as
\begin{align*}
  \thompsonset^t_\phi \coloneqq \{ \pi \in \Delta_A \mid \Expect^t \max_i \mu_i \leq \VBOSfn^t_\phi(\pi) \}.
\end{align*}
\end{definition}
Immediately we have two facts about the set $\thompsonset^t_\phi$. First, it is
non-empty since $\pi^\mathrm{TS} \in \thompsonset^t_\phi$, and second, it is a convex set since the optimism map $\VBOSfn^t_\phi$ is concave.

\subsection{Variational Bayesian optimistic sampling}
\label{s-VBOS}
Since the TS policy is in $\thompsonset^t_\phi$, it
implies that we can maximize the bound and also obtain a policy in $\thompsonset^t_\phi$, since
\begin{equation}
\label{e-VBOS-bound}
  \Expect^t \max_i \mu_i \leq \VBOSfn^t_\phi(\pi^\mathrm{TS}) \leq \max_{\pi \in \Delta_A} \VBOSfn^t_\phi(\pi).
\end{equation}
We refer to the policy that maximizes $\VBOSfn^t_\phi$ as the variational Bayesian optimistic sampling policy (VBOS) and present the VBOS procedure as Algorithm~\ref{alg-VBOS}. VBOS produces a policy $\pi^t \in \thompsonset^t_\phi$ at each round by construction. The maximum is guaranteed to exist since $\Delta_A$ is compact and $\VBOSfn^t_\phi$ is continuous.
The problem of maximizing $\VBOSfn^t_\phi$ is a concave maximization problem,
and is thus computationally tractable so long as each $\cgf^t_i$ is readily accessible.
This holds \emph{no matter what form the posterior
distribution takes}. This means we don't rely on any properties like
log-concavity, unimodality, or smoothness of the distribution for tractability of the optimization problem.

As an example, consider the case where each $\mu_i \sim \Nc(0, \sigma^2)$ independently. In this instance we recover the familiar bound of $\Expect \max_i \mu_i \leq \VBOSfn_\phi(\pi^\mathrm{TS}) = \max_{\pi \in \Delta_A} \VBOSfn_\phi(\pi) = \sigma\sqrt{2 \log A}$. The variational form allows us to compute a bound in the case of differently distributed random variables.

\begin{minipage}{0.45\textwidth}
\begin{algorithm}[H]
\caption{TS for bandits}
\begin{algorithmic}
\FOR{round $t=1,2,\ldots,T$}
  \STATE sample $\hat \mu \sim \phi (\cdot \mid \Fc_t)$
  \STATE choose $a_t \in \argmax_i \hat \mu_i$
\ENDFOR
\end{algorithmic}
\label{alg-ts}
\end{algorithm}
\end{minipage}
\hfill
\begin{minipage}{0.45\textwidth}
\begin{algorithm}[H]
\caption{VBOS for bandits}
\begin{algorithmic}
\FOR{round $t=1,2,\ldots,T$}
  \STATE compute $\pi^t = \argmax_{\pi \in \Delta_A} \VBOSfn_\phi^t(\pi)$
  \STATE sample $a_t \sim \pi^t$
\ENDFOR
\end{algorithmic}
\label{alg-VBOS}
\end{algorithm}
\end{minipage}

\subsection{Regret analysis}
Now we shall present regret bounds for algorithms that produce policies in the 
optimistic set. These bounds cover both TS and VBOS.
\begin{restatable}{lem}{lemgenericboundd}
\label{lem-generic}
Let $\rm alg$ produce any sequence of
policies $\pi^t$, $t=1,\ldots,T$, that satisfy $\pi^t \in \thompsonset_\phi^t$, then
\begin{align*}
  \bayesregret(\phi, {\rm alg}, T) &\leq \Expect \sum_{t=1}^T  \sum_{i=1}^A \pi^t_i \irate{i}{t}(-\log \pi^t_i).
\end{align*}
\end{restatable}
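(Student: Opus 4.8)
The plan is to reduce the $T$-round Bayesian regret to a sum of per-round conditional regrets via iterated expectations, and then to control each per-round term by a single application of the optimistic-set membership hypothesis $\pi^t \in \thompsonset_\phi^t$. Writing the outer expectation $\Expect$ over the prior draw $\mu \sim \phi$, the reward noise, and any internal randomness of $\rm alg$, I would first condition successively on the filtration $(\Fc_t)$ to obtain
\[
\bayesregret(\phi, {\rm alg}, T) = \Expect \sum_{t=1}^T \Expect^t\!\left[\max_i \mu_i - r_t\right],
\]
so that it suffices to bound each conditional per-round regret $\Expect^t[\max_i \mu_i - r_t]$ by $\sum_{i=1}^A \pi^t_i\,\irate{i}{t}(-\log \pi^t_i)$.

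The next step is to evaluate the conditional expected reward. Since $r_t = \mu_{a_t} + \eta^t_{a_t}$ with zero-mean noise, and since the action $a_t$ is drawn from the $\Fc_t$-measurable policy $\pi^t$ (the policy is produced as a measurable function of the history) independently of $\mu$ given $\Fc_t$, conditioning on the identity of the chosen arm yields
\[
\Expect^t r_t = \sum_{i=1}^A \Prob(a_t = i \mid \Fc_t)\, \Expect^t \mu_i = \sum_{i=1}^A \pi^t_i\, \Expect^t \mu_i,
\]
the noise contribution vanishing because $\Expect^t \eta^t_{a_t} = 0$. This is the step that demands the most care: I must verify that, conditional on $\Fc_t$, the sampled action is independent of the unknown $\mu$ — the action depends only on the past — so that the posterior mean of the reward factorizes as the policy-weighted average of the posterior arm means $\Expect^t \mu_i$.

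Finally I would invoke the hypothesis $\pi^t \in \thompsonset_\phi^t$, which by definition states exactly that $\Expect^t \max_i \mu_i \leq \VBOSfn^t_\phi(\pi^t) = \sum_{i=1}^A \pi^t_i\big(\Expect^t \mu_i + \irate{i}{t}(-\log \pi^t_i)\big)$. Subtracting the expression for $\Expect^t r_t$ computed above, the posterior-mean terms $\sum_i \pi^t_i \Expect^t \mu_i$ cancel exactly, leaving the clean per-round bound
\[
\Expect^t[\max_i \mu_i - r_t] \leq \sum_{i=1}^A \pi^t_i\, \irate{i}{t}(-\log \pi^t_i).
\]
Summing over $t$ and taking the outer expectation completes the proof. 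I note that independence of the arms is never needed — all dependence was already absorbed into the maximal inequality of Lemma~\ref{l-exp_max} and hence into the definition of $\thompsonset_\phi^t$ — so the only nontrivial ingredient beyond the reward-factorization bookkeeping is the direct substitution of the defining inequality of the optimistic set.
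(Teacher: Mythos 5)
Your proposal is correct and follows essentially the same route as the paper's proof: tower property to reduce to per-round conditional regret, factorization $\Expect^t r_t = \sum_i \pi^t_i \Expect^t \mu_i$, and substitution of the optimism inequality defining $\thompsonset_\phi^t$ so the posterior-mean terms cancel. If anything, your explicit appeal to the definition of the optimistic set (rather than Lemma~\ref{l-exp_max}, which the paper cites but which only establishes membership for the TS policy) and your care about the conditional independence of $a_t$ and $\mu$ given $\Fc_t$ make the argument slightly cleaner than the paper's terse version.
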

\vspace{-0.5em}
This bound is generic in that it holds for \emph{any} set of posteriors and \emph{any} algorithm producing policies in $\thompsonset^t_\phi$. The regret depends on the inverse rate functions of the posteriors, no matter how complicated those posteriors may be, and faster concentrating posteriors, \ie, larger rate functions, will have lower regret. Loosely speaking, as more data accumulates the posteriors concentrate towards the true value (under benign conditions \cite[Ch. 10.2]{van2000asymptotic}), implying $\irate{i}{t} \rightarrow 0$. By carefully examining the rate of concentration of the posteriors we can derive concrete regret bounds in specific cases, such as the standard sub-Gaussian reward noise case, which we do next.
\begin{restatable}{thm}{thmbrbound}
\label{thm-brbound}
Let $\rm alg$ produce any sequence of
policies $\pi^t$, $t=1,\ldots,T$, that satisfy $\pi^t \in \thompsonset_\phi^t$ and assume that both the prior and reward noise are $1$-sub-Gaussian for each arm, then
\[
  \bayesregret(\phi, {\rm alg}, T) \leq \sqrt{2 A T \log A (1+\log T)} = \tilde O(\sqrt{A T}).
\]
\end{restatable}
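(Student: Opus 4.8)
The starting point is the generic bound of Lemma~\ref{lem-generic}, which reduces the task to controlling $\Expect\sum_{t=1}^T\sum_{i=1}^A \pi^t_i\, \irate{i}{t}(-\log\pi^t_i)$. The plan is to (i) convert each inverse rate function into an explicit square-root term using sub-Gaussianity, (ii) split the resulting sum with two applications of Cauchy--Schwarz, and (iii) control the remaining ``information'' sum by a pigeonhole/harmonic-series argument, finishing with Jensen's inequality.

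For step (i): if $\mu_i\mid\Fc_t$ is $\sigma$-sub-Gaussian then $\cgf^t_i(\beta)\le \sigma^2\beta^2/2$, so $(\cgf^t_i)^*(x)\ge x^2/(2\sigma^2)$ and hence $\irate{i}{t}(y)\le\sigma\sqrt{2y}$ for all $y\ge 0$ (the Gaussian example after Theorem~\ref{thm-irate} is the equality case). So I need a bound on the posterior sub-Gaussian parameter. Let $N^t_i$ denote the number of times arm $i$ has been played before round $t$. The key lemma I would establish is that, because the prior and the per-step noise are both $1$-sub-Gaussian, the posterior $\mu_i\mid\Fc_t$ is $(N^t_i+1)^{-1/2}$-sub-Gaussian, i.e.\ $\cgf^t_i(\beta)\le \beta^2/(2(N^t_i+1))$ and therefore $\irate{i}{t}(y)\le\sqrt{2y/(N^t_i+1)}$. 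This posterior-contraction step is the main obstacle: it is immediate in the conjugate Gaussian case (posterior variance $1/(N^t_i+1)$ with unit prior and noise variance) but for a general $1$-sub-Gaussian prior and likelihood it requires a dedicated argument bounding the posterior cumulant generating function, and it is the only place the sub-Gaussian assumption is used.

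Granting this, set $y^t_i = -\log\pi^t_i \ge 0$. Applying Cauchy--Schwarz across arms, pathwise and for each fixed $t$,
\[
\sum_{i=1}^A \pi^t_i\sqrt{\tfrac{2 y^t_i}{N^t_i+1}} \le \sqrt{\sum_{i=1}^A \tfrac{\pi^t_i}{N^t_i+1}}\;\sqrt{\sum_{i=1}^A 2\pi^t_i\, y^t_i}.
\]
The second factor is $\sqrt{2}$ times the square root of the Shannon entropy of $\pi^t$, hence at most $\sqrt{2\log A}$. A second Cauchy--Schwarz across the $T$ rounds then gives
\[
\sum_{t=1}^T\sum_{i=1}^A \pi^t_i\,\irate{i}{t}(y^t_i) \le \sqrt{2\log A}\,\sqrt{T}\,\sqrt{\sum_{t=1}^T\sum_{i=1}^A \tfrac{\pi^t_i}{N^t_i+1}}.
\]

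It remains to bound $\Expect\sum_{t,i}\pi^t_i/(N^t_i+1)$. Since $\pi^t_i=\Prob(a_t=i\mid\Fc_t)$ and $1/(N^t_i+1)$ is $\Fc_t$-measurable, the tower property replaces $\pi^t_i$ by the indicator $\mathbf{1}(a_t=i)$ in expectation; collecting the pulls of a fixed arm, the counter $N^t_i$ runs through $0,1,\dots,N^{T+1}_i-1$, so the inner sum telescopes into a harmonic number bounded by $1+\log N^{T+1}_i \le 1+\log T$. Summing over the $A$ arms yields $\Expect\sum_{t,i}\pi^t_i/(N^t_i+1)\le A(1+\log T)$. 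Finally, taking expectations in the displayed bound and passing it through the square root by Jensen's inequality gives
\[
\bayesregret(\phi,{\rm alg},T)\le \sqrt{2\log A}\,\sqrt{T}\,\sqrt{A(1+\log T)} = \sqrt{2AT\log A(1+\log T)},
\]
as claimed. The only non-routine ingredient is the posterior sub-Gaussianity lemma; the remaining manipulations are the standard two-step Cauchy--Schwarz plus the harmonic-sum counting argument.
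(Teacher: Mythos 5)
Your proposal is correct and follows essentially the same route as the paper's proof: the generic bound of Lemma~\ref{lem-generic}, the sub-Gaussian bound $\cgf^t_i(\beta)\le\beta^2/(2(n^t_i+1))$ giving $\irate{i}{t}(y)\le\sqrt{2y/(n^t_i+1)}$, Cauchy--Schwarz with the entropy bound $H(\pi^t)\le\log A$, and the pigeonhole/harmonic-sum argument (the paper's Lemma~\ref{l-pigeonhole}), finished by Jensen. Your two-step Cauchy--Schwarz (over arms, then over rounds) versus the paper's single application over the double sum is an immaterial difference, and the posterior-contraction step you rightly flag as the crux is likewise asserted without proof in the paper (Eq.~\eqref{e-asssubg}).
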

\vspace{-0.5em}
The assumption that the prior is $1$-sub-Gaussian is not really necessary and can be removed if, for example, we assume the algorithm pulls each arm once at the start of the procedure.
Theorem \ref{thm-brbound} bounds the regret of \emph{both} TS and VBOS, as well as any other algorithm producing optimistic policies.
The point of this theorem is not to provide the tightest regret bound but to provide some insight into the question of how close to the
true TS policy another policy has to be in order to achieve
similar (in particular sub-linear) regret. We have provided one
answer - any policy in the optimistic set $\thompsonset_\phi^t$ for all $t$, 
is close enough. However, this set contracts as more data accumulates (as the posteriors concentrate)
so it becomes harder to find good approximating
policies. That being said, the VBOS policy is always sufficiently close because it is always in the optimistic set.
In Figure \ref{fig-simplex} we show the progression
of a single realization of the set $\thompsonset^t_\phi$ over time for a three-armed Gaussian bandit with Gaussian prior and where the policy being followed is TS. Initially, $\thompsonset^t_\phi$ is quite wide, but gradually as more data accrues it concentrates toward the optimal arm vertex of the simplex.
\begin{figure}[ht]
\centering
\begin{subfigure}{0.32\textwidth}
\centering
  \includegraphics[width=\linewidth]{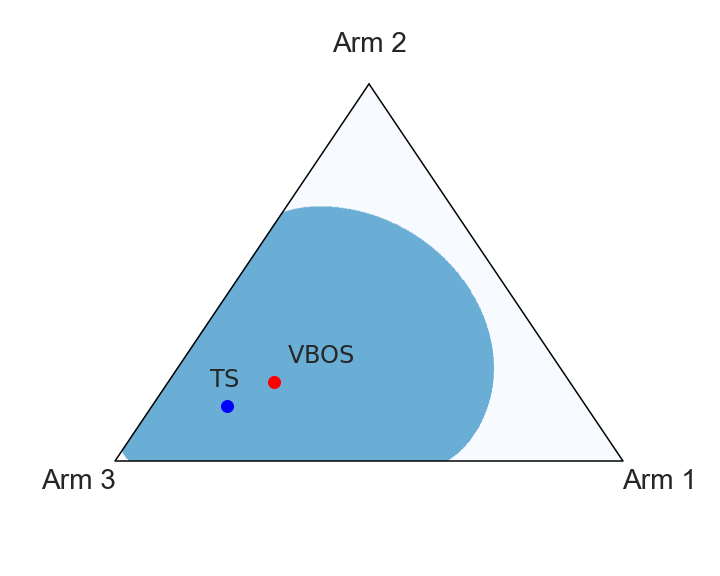}
   \caption{$t=1$}
\end{subfigure}
\begin{subfigure}{0.32\textwidth}
\centering
  \includegraphics[width=\linewidth]{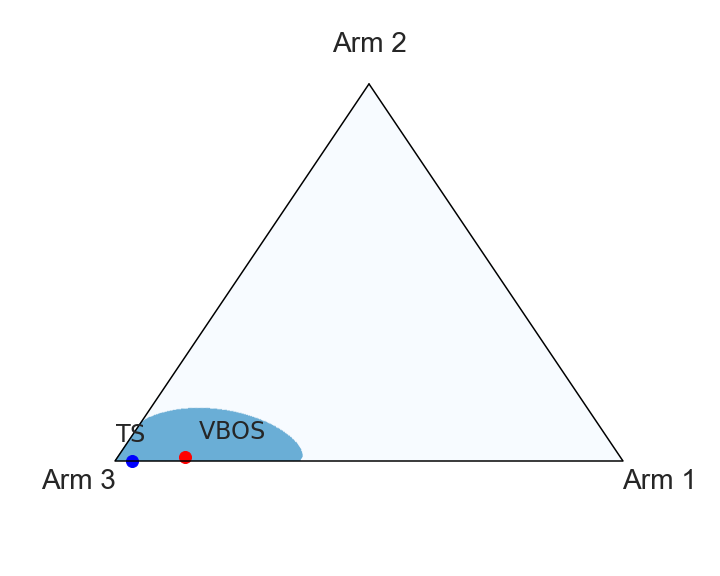}
   \caption{$t=32$}
\end{subfigure}
\begin{subfigure}{0.32\textwidth}
\centering
  \includegraphics[width=\linewidth]{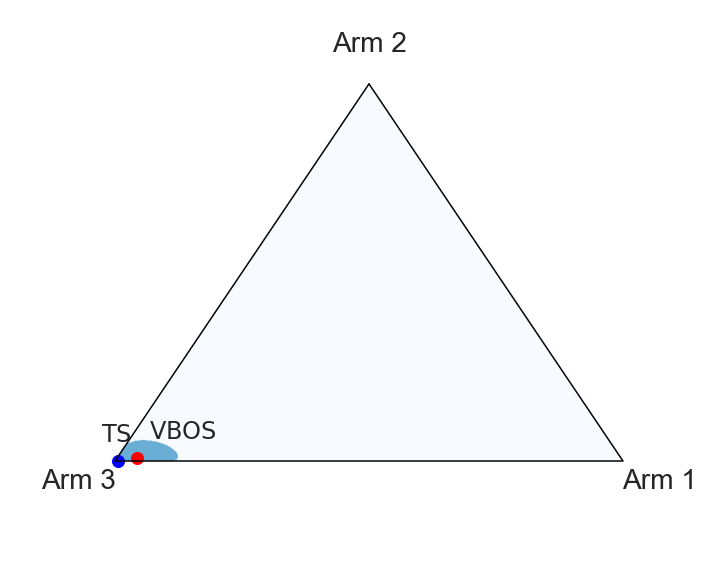}
   \caption{$t=256$}
\end{subfigure}
\caption{Policy simplex for a three-armed bandit. The shaded area is the optimistic set $\thompsonset_\phi^t$. The blue dot is the TS policy and the red dot is the VBOS policy.}
\label{fig-simplex}
\end{figure}

\section{Connections and interpretations of VBOS}
\paragraph{Exploration-exploitation tradeoff.}
Examining the problem that is solved to determine the VBOS policy we can
break it into two components
\[
  \VBOSfn_\phi(\pi) = \underbrace{ \sum_{i=1}^A \pi_i \Expect^t \mu_i}_{\mathrm{exploit}} + \underbrace{\sum_{i=1}^A\pi_i
  \irate{i}{t}(-\log \pi_i)}_{\mathrm{explore}},
\]
the first term of which includes the expected reward (under the posterior)
from pulling each arm and is the agent's current understanding of the world,
and the second term consists only of
the uncertainty in each arm. The policy that maximizes this function
is balancing this exploration and exploitation tradeoff, with a tradeoff
parameter of one. In some practical instances it may be beneficial to use a different
tradeoff parameter other than one, \eg, when there are a larger number of arms than
the total number of rounds to play. In this case VBOS can easily be tuned to be more or less exploring as needed.

\paragraph{Upper confidence bounds.}
Here we show a connection between VBOS and upper confidence bound (UCB) methods \cite{lai1985asymptotically, auer2002finite}. The Chernoff inequality tells us that the tail
probability of random variable $\mu:\Omega\rightarrow \reals$ can be bounded as
$\Prob(\mu - \Expect \mu> \epsilon) \leq \exp(-\cgf_\mu^*(\epsilon))$,
for any $\epsilon \geq 0$. UCB algorithms build upper confidence
intervals which contain the true value of the random variable with high-probability \cite{lai1985asymptotically, auer2002finite, kaufmann2012bayesian}.
If we want the interval to contain the true value of $\mu$ with probability at least
$1-\delta$ it requires setting $\epsilon = \irate{\mu}{}(-
\log \delta)$, yielding
$\Expect \mu + \irate{\mu}{}(-\log \delta) \geq \mu$ with high probability.
UCB approaches to the stochastic multi-armed bandit typically choose
the largest upper confidence bound (the `optimism in the face of uncertainty'
approach), that is, at time $t$ they select action
\[
a_t \in \argmax_{i}(\Expect^t \mu_i + \irate{i}{t}(-\log \delta_t)),
\]
for some schedule $\delta_t$. A lot of work goes into tuning $\delta_t$ for various
applications. By contrast, VBOS samples from policy
\[
a_t \sim \argmax_{\pi \in \Delta_A} \sum_{i=1}^A \pi_i \left(\Expect^t \mu_i + \irate{i}{t}(-\log \pi_i) \right).
\]
It is clear that VBOS is building upper confidence bounds on the
values of each arm that depend on the value of the policy, since if we denote $\pi^\star$ the maximizer to the above problem, then
$\Prob^t(\Expect^t \mu_i + \irate{i}{t}(-\log \pi^\star_i) < \mu_i) \leq \pi^\star_i.$
This procedure gives higher `bonuses'
to arms that end up with lower probability of being pulled, and lower bonuses
to arms with high probability of being pulled, which is similar to what is done in more sophisticated
UCB algorithms such as KL-UCB+ \cite{lai1987adaptive, garivier2011kl, honda2019note}.
This is desirable because if
the mean of an arm is low then there is a chance this is simply because we have
under-estimated it and perhaps we should pay more attention to that arm.
In this light VBOS can be interpreted as a stochastic
Bayesian UCB approach \cite{kaufmann2012bayesian}. One advantage is that we do not have to tune a schedule $\delta_t$, like TS our algorithm is randomized and \emph{stationary}, \ie, it does not depend on how many time-steps have elapsed other than implicitly through the posteriors.

\paragraph{K-learning.}
K-learning is an algorithm originally developed for efficient
reinforcement learning in MDPs where it enjoys a similar regret bound to TS
\cite{o2018variational, osband2013more, osband2016posterior}. For bandits, the agent policy at time $t$ is given by
\begin{align}
\label{e-klearn-tau}
  \tau_t^\star &= \argmin_{\tau \geq 0} \left(\tau \log \sum_{i=1}^A \exp (\Expect^t \mu_i/\tau
  + \cgf^t_i(1/\tau)) \right), & 
  \pi^t_i &\propto \exp (\Expect^t \mu_i/\tau_t^\star +
\cgf^t_i(1/\tau_t^\star)).
\end{align}
It can be shown that this is equivalent to VBOS when the `temperature' parameters $\tau_i$ that we optimize over
are constrained to be equal for each arm.
Although K-learning and VBOS have essentially the same regret bound, this can
make a large difference in practice. This connection also suggests
that recent reinforcement learning algorithms that rely on the log-sum-exp
operator, the `soft' max family \cite{haarnoja2017reinforcement, levine2018rlasinf, o2016pgq, richemond2017short}, are also in some sense approximating the VBOS policy, although as made clear in \cite{o2020making} these approaches do not correctly model the uncertainty and so will suffer linear regret in general.

\section{Bilinear saddle-point problems}
In this section we extend the above analysis to constrained bilinear saddle-point problems,
which include as special cases zero-sum two player games and constrained multi-armed
bandits, as well as standard stochastic multi-armed bandits. Equipped with the tools from the previous sections we shall formulate a convex optimization problem that when solved yields the VBOS
policy at each round, and prove that in two special cases this strategy yields a sub-linear regret bound. It is known, and we shall demonstrate empirically, that TS (Algorithm \ref{alg-ts-bilinear}) can fail, \ie, suffer linear regret for some instances in this more general regime \cite[\S 3.2]{o2020stochastic}. Consequently, this is the first setting where the variational approach is substantially more powerful than TS.

A finite dimensional, constrained, bilinear saddle-point problem is defined by a matrix $R \in \reals^{m \times \xdim}$ and two non-empty convex sets $\Pi \subseteq \reals^\xdim$, $\Lambda \subseteq \reals^m$ which are the feasible sets for the maximizer and minimizer respectively. Here we shall assume that the agent is the maximizer, \ie, that $\Pi = \Delta_\xdim$, and that the agent is uncertain about the entries of $R$ and must learn about them from experience. The saddle-point problem has optimal value given by
\begin{equation}
\label{e-vonneumann}
  V_{R, \Lambda}^\star := \min_{\y \in \Lambda} \max_{\x \in \Delta_\xdim} \y^\top R \x =
   \max_{\x \in \Delta_\xdim} \min_{\y \in \Lambda} \y^\top R \x
\end{equation}
where we can swap the order of the min-max due to the minimax theorem \cite{von2007theory}. Unlike the stochastic multi-armed bandit case the maximizing policy in the bilinear saddle-point is typically stochastic, and as such the agent will return a policy rather than a single action at each time-step. This is to allow the case, for example, an opponent who gets to know the agent policy in advance of selecting their own action. We shall consider the case where at each time
period the agent produces policy $\pi_t \in \Delta_\xdim$ over columns of $R$ from which action $a_t \in \{1, \ldots, A\}$ is sampled, and receives reward $r_t \in \reals$ and observation $o_t \in \mathcal{O}$ where either the reward, observation, or both are corrupted by noise $\eta^t$. The exact reward, noise, observation, and opponent space $\Lambda$ will depend on the problem setup. For instance, the vanilla stochastic multi-armed bandit
is a special case of this problem where $\Lambda$ is a singleton and the agent receives bandit feedback associated with the action sampled from its policy (\ie, $o_t = r_t = R_{a_t} + \eta_{a_t}^t$). The information available to the agent at the start of round $t$ is given by $\Fc_t = \{a_1, o_1, \ldots, a_{t-1}, o_{t-1}\}$. In this setup we define the Bayesian regret to be
\[
{\rm BayesRegret}(\phi, {\rm alg}, T) = \Expect_{R \sim \phi} \left( \Expect_{\eta, {\rm alg}} \sum_{t=1}^T (V_{R, \Lambda}^\star - r_t) \right).
\]
Analogously to the stochastic multi-armed bandit case we can define the optimism map and the optimistic set. Let $R_j:\Omega \rightarrow \reals^m$, $j=1, \ldots, \xdim$, denote the random variable corresponding to the $j$th column of $R$, denote
by $\cgf^t_j:\reals^\ydim \rightarrow \reals$ the cumulant generating function of
$R_j - \Expect^t R_j$, then assuming that $R_j \in L^\ydim_1$ we define the optimism map in this case to be
\begin{equation}
\VBOSfn^t_{\phi, \Lambda}(\pi) \coloneqq \min_{\y \in \Lambda, \tau \geq 0} \sum_{j=1}^\xdim \x_j(\y^\top \Expect^t R_j + \tau_j \cgf^t_j(\y / \tau_j) - \tau_j \log \x_j)
\end{equation}
and the optimistic set to be
\begin{equation}
  \thompsonset^t_{\phi, \Lambda} \coloneqq \{ \pi \in \Delta_A \mid \Expect^t V^\star_{R, \Lambda} \leq \VBOSfn^t_{\phi, \Lambda}(\pi) \}.
\end{equation}
With these we can bound $\Expect^t V_{R, \Lambda}^\star$ using the
techniques we have previously developed.
\begin{restatable}{lem}{lgamesbound}
\label{l-games-bound}
Assuming that $R \in L^{m \times \xdim}_1$ we have 
\begin{equation}
\label{e-game-bound}
\Expect^t V_{R, \Lambda}^\star \leq \max_{\x \in \Delta_\xdim} \VBOSfn^t_{\phi,\Lambda}(\x).
\end{equation}
\end{restatable}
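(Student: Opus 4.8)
The plan is to mirror the bandit argument behind Lemma~\ref{l-exp_max}, but to first collapse the saddle-point value onto a scalar maximum by fixing the adversary's variable, and then to recover the joint minimization over $\y$ and $\tau$ through a minimax exchange. Throughout write
\[
  F(\x,\y,\tau) \coloneqq \sum_{j=1}^\xdim \x_j\left(\y^\top \Expect^t R_j + \tau_j \cgf^t_j(\y/\tau_j) - \tau_j \log \x_j\right),
\]
so that $\VBOSfn^t_{\phi,\Lambda}(\x) = \min_{\y \in \Lambda,\,\tau \geq 0} F(\x,\y,\tau)$ and the claim \eqref{e-game-bound} reads $\Expect^t V_{R,\Lambda}^\star \leq \max_{\x \in \Delta_\xdim}\min_{\y\in\Lambda,\,\tau\geq 0} F(\x,\y,\tau)$.

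First I would fix an arbitrary \emph{deterministic} $\y \in \Lambda$ (not depending on the realization of $R$). Since $V_{R,\Lambda}^\star = \min_{\y'\in\Lambda}\max_{\x\in\Delta_\xdim}\y'^\top R\x$ and the maximum of a linear function over the simplex is attained at a vertex, we obtain the pointwise (almost sure) bound $V_{R,\Lambda}^\star \leq \max_{\x\in\Delta_\xdim}\y^\top R\x = \max_j \y^\top R_j$, which survives applying $\Expect^t$. The right-hand side is now exactly the scalar maximal quantity treated by Lemma~\ref{l-exp_max}, applied to the random variables $\y^\top R_j$. Concretely, let $A_j = \{\, j \in \argmax_{j'} \y^\top R_{j'}\,\}$ (ties broken arbitrarily) and define $q = q(\y) \in \Delta_\xdim$ by $q_j = \Prob^t(A_j)$, so the $A_j$ partition $\Omega$ and $\Expect^t\max_j \y^\top R_j = \sum_{j:\,q_j>0} q_j\,\Expect^t[\y^\top R_j \mid A_j]$. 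Applying Lemma~\ref{l-condexp1} to $X = \y^\top R_j$ with event $A_j$ and parameter $\tau_j$, and noting that the cumulant generating function of $\y^\top R_j - \Expect^t \y^\top R_j$ evaluated at $1/\tau_j$ is precisely $\cgf^t_j(\y/\tau_j)$, yields $\Expect^t[\y^\top R_j\mid A_j] \leq \y^\top\Expect^t R_j + \tau_j\cgf^t_j(\y/\tau_j) - \tau_j\log q_j$. Summing against $q_j$ gives, for every $\tau\geq 0$,
\[
  \Expect^t V_{R,\Lambda}^\star \leq \Expect^t\max_j \y^\top R_j \leq F(q(\y),\y,\tau) \leq \max_{\x\in\Delta_\xdim} F(\x,\y,\tau),
\]
and since $\y\in\Lambda$ and $\tau\geq 0$ were arbitrary I can minimize to conclude $\Expect^t V_{R,\Lambda}^\star \leq \min_{\y\in\Lambda,\,\tau\geq 0}\max_{\x\in\Delta_\xdim}F(\x,\y,\tau)$.

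The remaining and decisive step is to exchange the outer min and max, i.e.\ to show $\min_{\y,\tau}\max_\x F = \max_\x\min_{\y,\tau}F$, which identifies the bound just derived with $\max_{\x\in\Delta_\xdim}\VBOSfn^t_{\phi,\Lambda}(\x)$. Here the structure of $F$ does the work: for fixed $(\y,\tau)$ the map $\x\mapsto F(\x,\y,\tau)$ is concave, since each $-\tau_j\x_j\log\x_j$ is concave and the remaining terms are linear in $\x$; and for fixed $\x$ the map $(\y,\tau)\mapsto F(\x,\y,\tau)$ is convex, because $\tau_j\cgf^t_j(\y/\tau_j)$ is the perspective of the convex function $\cgf^t_j$ and hence jointly convex in $(\y,\tau_j)$, the other terms being linear. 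As $\Delta_\xdim$ is compact and $\Lambda\times\reals^\xdim_{\geq 0}$ is convex, I would invoke Sion's minimax theorem to justify the swap.

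I expect the main obstacle to be the technical justification of this exchange rather than the algebra. Because neither $\Lambda$ nor the cone $\{\tau\geq 0\}$ is compact, one must verify lower semicontinuity of $F$ in $(\y,\tau)$ up to the boundary — in particular the behaviour of $\tau_j\cgf^t_j(\y/\tau_j)$ as $\tau_j\downarrow 0$, which is handled by taking the closure of the perspective (consistent with the $\tau=0$ convention already appearing in Lemma~\ref{l-condexp1} and with the standing assumption that each $\cgf^t_j$ is closed and proper). With the compactness of $\Delta_\xdim$ and these convexity/semicontinuity properties in place, Sion's theorem delivers the equality, and combining it with the chain of the previous paragraph gives $\Expect^t V_{R,\Lambda}^\star \leq \max_{\x\in\Delta_\xdim}\VBOSfn^t_{\phi,\Lambda}(\x)$, as required.
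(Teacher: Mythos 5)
Your proof is correct and follows essentially the same route as the paper's: fix $\y \in \Lambda$ to pull the expectation inside the min (the paper phrases this as Jensen's inequality), reduce to the scalar maximum $\Expect^t \max_j \y^\top R_j$, bound it via the cumulant-generating-function inequality of Lemma~\ref{l-condexp1}/Lemma~\ref{l-exp_max}, and finally exchange $\min_{\y,\tau}$ and $\max_{\x}$ by a minimax theorem. Your only departure is in rigor, not in substance: you justify the swap explicitly via Sion's theorem, checking concavity in $\x$, joint convexity in $(\y,\tau)$ through the perspective of $\cgf^t_j$, and lower semicontinuity at $\tau_j = 0$, where the paper simply cites ``the minimax theorem.''
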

This lemma motivates the VBOS algorithm for bilinear saddle-points, presented as Algorithm \ref{alg-VBOS-bilinear}. VBOS produces policies $\pi^t \in \thompsonset^t_{\phi, \Lambda}$ for all $t$ by construction. On the other hand, in some cases TS can produce policies $\pi^t \not \in \thompsonset^t_{\phi, \Lambda}$, which we shall show by counter-example in the sequel.

At first glance maximizing $\VBOSfn^t_{\phi, \Lambda}$ might seem challenging, but we can convert it into a convex minimization problem by maximizing over $\x$ to yield dual convex optimization problem:
\begin{equation}
\label{e-dual-game}
  \begin{array}{ll}
    \mbox{minimize} & V + \sum_{j=1}^\xdim \tau_j \exp(s_j / \tau_j)\\
    \mbox{subject to} & s_j \geq \lambda^\top \Expect^t R_j + \tau_j \cgf^t_j(\y /\tau_j) - V
    - \tau_j, \quad j=1, \ldots, \xdim\\
    &\tau \geq 0, \y \in \Lambda,\\
  \end{array}
\end{equation}
over variables $\tau \in \reals^\xdim, \y \in \reals^\ydim, s \in \reals^\xdim, V \in \reals$. The VBOS distribution can be recovered using $\x^\star_i = \exp(s^\star_i / \tau^\star_i)$.
Problem \eqref{e-dual-game} is an \emph{exponential
cone program} and can be solved efficiently using modern methods \cite{ocpb:16,
scs, serrano2015algorithms, o2021operator}.

\begin{minipage}{0.45\textwidth}
\begin{algorithm}[H]
\caption{TS for saddle-points}
\begin{algorithmic}
\FOR{round $t=1,2,\ldots,T$}
  \STATE sample $\hat R \sim \phi (\cdot \mid \Fc_t)$
  \STATE return $\displaystyle \pi = \argmax_{\pi \in \Delta_A} \min_{\lambda \in \Lambda} \lambda^\top \hat R \pi$
\ENDFOR
\end{algorithmic}
\label{alg-ts-bilinear}
\end{algorithm}
\end{minipage}
\hfill
\begin{minipage}{0.45\textwidth}
\begin{algorithm}[H]
\caption{VBOS for saddle-points}
\begin{algorithmic}
\FOR{round $t=1,2,\ldots,T$}
  \STATE return $\pi^t = \argmax \VBOSfn^t_{\phi, \Lambda}(\pi)$
\ENDFOR
\end{algorithmic}
\label{alg-VBOS-bilinear}
\end{algorithm}
\end{minipage}

\subsection{Zero-sum two-player games}
Consider the problem of an agent playing a two-player zero-sum matrix game against
an opponent. %
In this setting the agent `column player' selects a
column index $j \in \{1, \ldots, \xdim\}$ and the `row player' selects a row index
$i\in\{1,\ldots,\ydim\}$, possibly with knowledge about the policy the agent is using to select the column. Note that in this case $\Lambda = \Delta_{\ydim}$. These choices are revealed to the
players and the row player makes a payment of $r_t = R_{ij} + \eta_{ij}^t$ to the column
player, where $\eta_{ij}^t$ is a zero-mean noise term associated with actions $i,j$ at time $t$, and the observation the agents receive is the reward and the actions of both players.
The optimal strategies $(\x^\star, \y^\star)$ are a Nash equilibrium for the game \cite{von2007theory, cesa2006prediction}. We shall assume that the agent we control is the column player, and that we have no control over the actions of the opponent. It was recently shown that both K-learning and UCB variants enjoy sub-linear regret bounds in this context \cite{o2020stochastic}. %
Here we prove a regret bound for policies in the optimistic set under a sub-Gaussian assumption.

\begin{restatable}{thm}{lgamesbr}
Let $\rm alg$ produce any sequence of
policies $\pi^t$, $t=1,\ldots,T$, that satisfy $\pi^t \in \thompsonset_{\phi, \Lambda}^t$ and let  the opponent produce any policies $\lambda^t$, $t=1,\ldots,T$, adapted to $\Fc_t$. Assuming that the prior over each entry of $R$ and reward noise are $1$-sub-Gaussian we have
\[
  \bayesregret(\phi, {\rm alg}, T) \leq \sqrt{2 \xdim \ydim T \log \xdim(1+ \log T)} = \tilde O(\sqrt{\ydim \xdim T}).
\]
\end{restatable}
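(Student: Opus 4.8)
The plan is to mirror the Bayesian regret argument behind Theorem~\ref{thm-brbound}, but to exploit the minimization over the opponent's feasible set $\Lambda$ that appears inside $\VBOSfn^t_{\phi,\Lambda}$ in order to cancel the realized reward. First I would write the per-round Bayesian regret conditional on $\Fc_t$ as $\Expect^t[V^\star_{R,\Lambda} - r_t] = \Expect^t V^\star_{R,\Lambda} - (\lambda^t)^\top \Expect^t[R]\pi^t$, using that the noise is zero-mean and that the played pair $(i,j)$ is drawn from the product $\lambda^t,\pi^t$, so that $\Expect^t r_t = (\lambda^t)^\top\Expect^t[R]\pi^t$. Since $\pi^t \in \thompsonset^t_{\phi,\Lambda}$, the optimism inequality gives $\Expect^t V^\star_{R,\Lambda} \le \VBOSfn^t_{\phi,\Lambda}(\pi^t)$, and because $\VBOSfn^t_{\phi,\Lambda}$ is defined through a minimum over $\lambda \in \Lambda = \Delta_\ydim$ and $\tau \ge 0$, I may evaluate the inner minimand at the opponent's own (feasible, $\Fc_t$-measurable) policy $\lambda^t$ together with any $\tau \ge 0$ to obtain an upper bound. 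The linear terms $\sum_j \pi^t_j (\lambda^t)^\top \Expect^t R_j = (\lambda^t)^\top\Expect^t[R]\pi^t$ then cancel exactly against $\Expect^t r_t$, leaving the per-round bound $\Expect^t[V^\star_{R,\Lambda}-r_t] \le \min_{\tau\ge 0}\sum_{j=1}^\xdim \pi^t_j\big(\tau_j\cgf^t_j(\lambda^t/\tau_j) - \tau_j\log\pi^t_j\big)$. This cancellation is the crux and is precisely what can fail for TS.

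Next I would insert the sub-Gaussian bound on the posterior cumulant generating functions. Assuming the entries of $R$ are independent with $1$-sub-Gaussian priors and $1$-sub-Gaussian reward noise, the same posterior-concentration reasoning used for Theorem~\ref{thm-brbound} shows each posterior entry $R_{ij}\mid\Fc_t$ is $1/(N^t_{ij}+1)$-sub-Gaussian, where $N^t_{ij}$ counts the rounds before $t$ in which the pair $(i,j)$ was played, so that $\cgf^t_j(\y) \le \sum_{i=1}^\ydim \y_i^2/\big(2(N^t_{ij}+1)\big)$. Writing $\sigma_{j,t}^2 \coloneqq \sum_i (\lambda^t_i)^2/(N^t_{ij}+1)$, we get $\tau_j\cgf^t_j(\lambda^t/\tau_j)\le \sigma_{j,t}^2/(2\tau_j)$, and minimizing $\sigma_{j,t}^2/(2\tau_j) - \tau_j\log\pi^t_j$ over $\tau_j\ge 0$ yields $\sqrt{2\sigma_{j,t}^2(-\log\pi^t_j)}$, so the per-round bound becomes $\sum_j \pi^t_j\sqrt{2\sigma_{j,t}^2(-\log\pi^t_j)}$. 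Applying Cauchy--Schwarz to the factorization $\pi^t_j\sqrt{2\sigma^2_{j,t}(-\log\pi^t_j)} = \sqrt{2\pi^t_j\sigma^2_{j,t}}\cdot\sqrt{\pi^t_j(-\log\pi^t_j)}$ and using $\sum_j \pi^t_j(-\log\pi^t_j)\le\log\xdim$ bounds this by $\sqrt{2\log\xdim}\,\sqrt{\sum_j \pi^t_j\sigma^2_{j,t}}$.

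Finally I would sum over $t$ and take expectations. Using $(\lambda^t_i)^2\le\lambda^t_i$ (since $\lambda^t\in\Delta_\ydim$) gives $\sum_j\pi^t_j\sigma^2_{j,t}\le\sum_{i,j}\lambda^t_i\pi^t_j/(N^t_{ij}+1)$, and since $\lambda^t_i\pi^t_j = \Prob^t(\text{pair }(i,j)\text{ played})$, the standard telescoping/harmonic-series argument through the filtration yields $\Expect\sum_{t=1}^T \lambda^t_i\pi^t_j/(N^t_{ij}+1)\le 1+\log T$ for each of the $\ydim\xdim$ pairs, hence $\Expect\sum_t\sum_j\pi^t_j\sigma^2_{j,t}\le \ydim\xdim(1+\log T)$. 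Two further applications of Cauchy--Schwarz (once over $t$, then Jensen to move the expectation inside the square root) give $\bayesregret(\phi,{\rm alg},T)\le \sqrt{2\log\xdim}\,\sqrt{T}\,\sqrt{\ydim\xdim(1+\log T)} = \sqrt{2\xdim\ydim T\log\xdim(1+\log T)}$, as claimed. The main obstacle I anticipate is rigorously justifying the per-round cancellation: it requires $\lambda^t$ to be simultaneously feasible ($\lambda^t\in\Lambda$) and $\Fc_t$-measurable, so that it is a legitimate point for the inner minimization while also being the policy that generates $\Expect^t r_t$. A secondary technical point is confirming the $1/(N^t_{ij}+1)$-sub-Gaussian posterior bound for general (non-Gaussian) sub-Gaussian priors and verifying the harmonic-sum inequality, but both follow the pattern already established in the bandit analysis.
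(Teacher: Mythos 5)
Your proposal matches the paper's proof essentially step for step: the same cancellation of the bilinear term $(\lambda^t)^\top(\Expect^t R)\pi^t$ obtained by substituting the opponent's feasible, $\Fc_t$-measurable policy $\lambda^t$ into the minimization defining $\VBOSfn^t_{\phi,\Lambda}$, the same sub-Gaussian bound on $\cgf^t_j$ with pair counts $n^t_{ij}$, the same optimization over $\tau_j$, Cauchy--Schwarz with $H(\pi^t)\le\log\xdim$ and $(\lambda^t_i)^2\le\lambda^t_i$, and the pigeonhole lemma applied to the $\ydim\xdim$ pairs. The only deviation---applying Cauchy--Schwarz per round and then again over $t$ rather than jointly over $(t,j)$---is cosmetic and yields the identical bound.
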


\begin{figure}[h]
\centering
\begin{subfigure}{0.45\textwidth}
\centering
  \includegraphics[width=\linewidth]{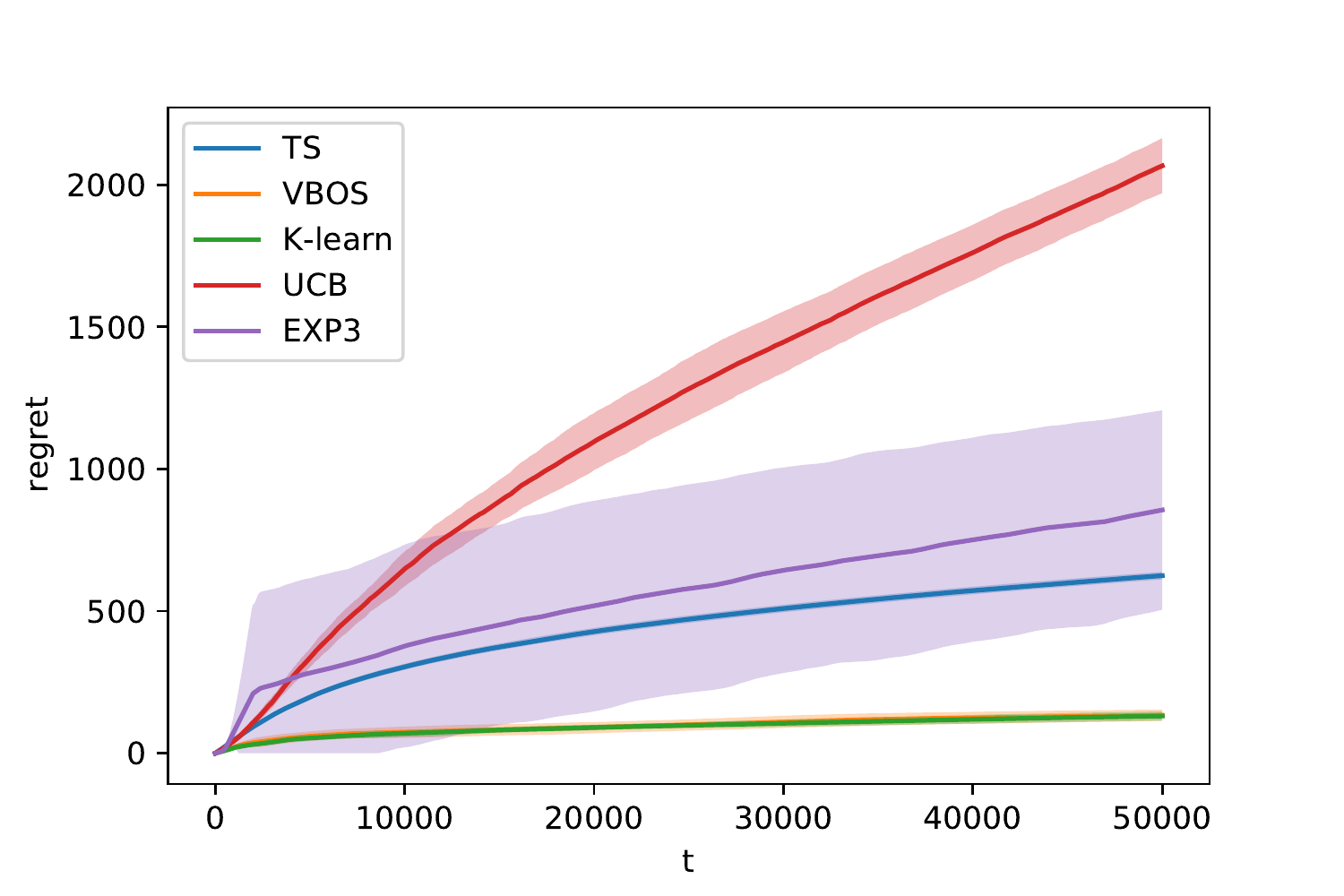}
  \caption{Self-play.}
\label{fig-rps}
\end{subfigure}
\begin{subfigure}{0.45\textwidth}
\centering
  \includegraphics[width=\linewidth]{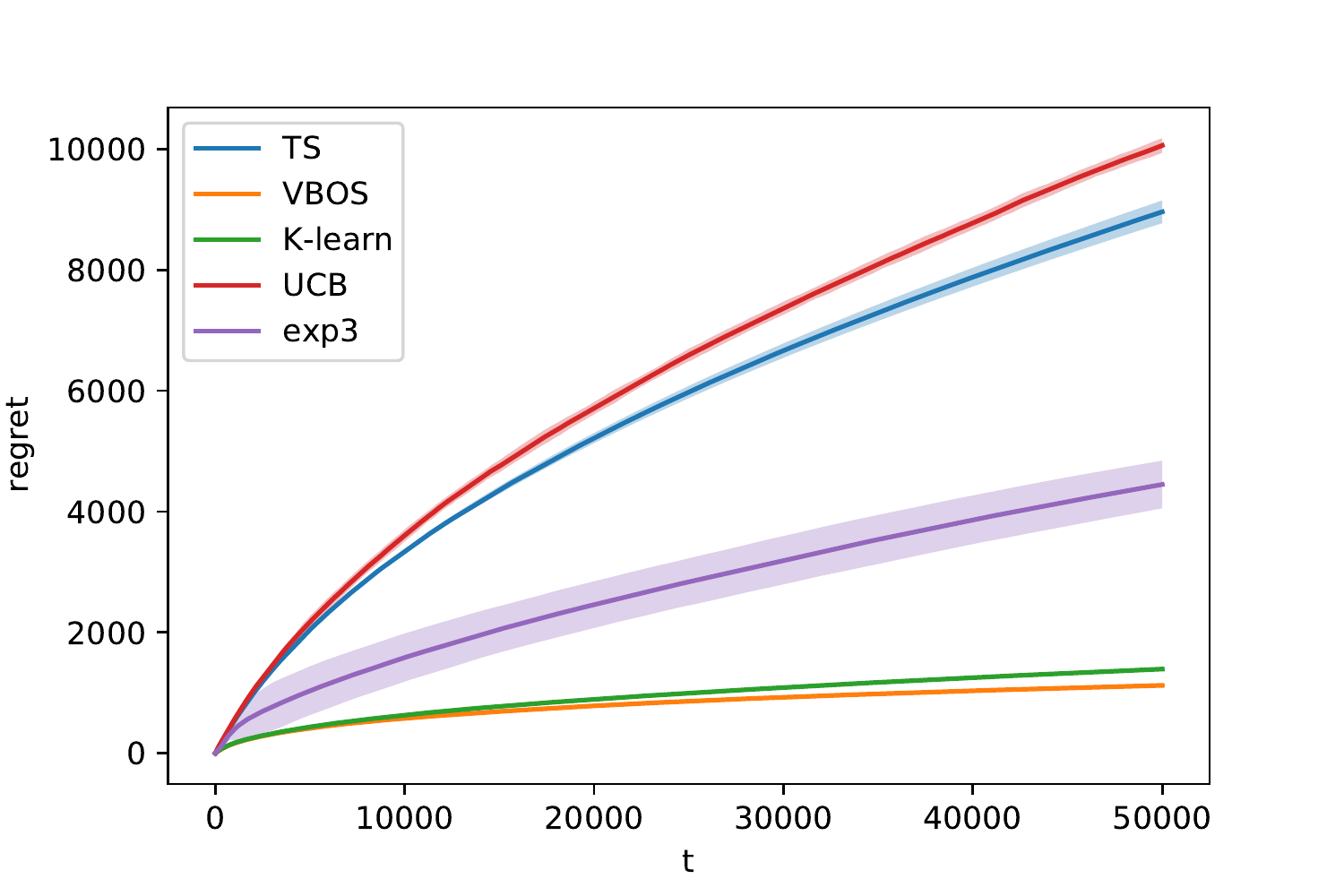}
  \caption{Against best-response opponent.}
\label{fig-rps-kldiv}
\end{subfigure}
\caption{Regret on $50 \times 50$ random game.}
\label{fig-random-game}
\end{figure}

In Figure \ref{fig-random-game} we show how five agents perform on a $50\times50$ randomly generated game in self-play and against a best-response opponent.  The entries of $R$ were sampled from prior $\Nc(0,1)$ and the noise term $\eta^t$ at each time-period was also sampled from $\Nc(0,1)$. In self-play the opponent has the same knowledge and is applying the same algorithm to generate the policy at each time-step. The best-response opponent knows the matrix $M$ exactly as well as the policy that the agent produces and picks the action that minimizes the reward to the agent. The matrix is the same in both cases. We compare the K-learning and UCB algorithms as described in \cite{o2020stochastic} as well as EXP3 \cite{exp3}, a generic adversarial bandit algorithm, VBOS Algorithm \ref{alg-VBOS-bilinear}, and TS Algorithm \ref{alg-ts-bilinear} over 8 seeds. As can be seen VBOS is the best performing algorithm in both cases, followed closely by K-learning. Although performing moderately well in self-play, it is no surprise that TS does very poorly against the best-response opponent since it is sometimes susceptible to exploitation, as we show next.

Consider the two-player zero-sum game counter-example presented in \cite{o2020stochastic}. Here $\Lambda = \Delta_2$, and
\begin{equation}
 \label{e-2x2game}
  \begin{array}{lr}
 R = \begin{bmatrix}
 r & 0 \\ 0 & -1
 \end{bmatrix},
     &
 r = \left\{\begin{array}{ll}
 1 & \text{with probability}\ 1/2\\
 -1 & \text{with probability}\ 1/2.
 \end{array}
 \right.
   \end{array}
\end{equation}
In this case TS will play one of two policies: $\pi^\mathrm{TS,1} = [1,0]$ or $\pi^\mathrm{TS,2} = [1/2, 1/2]$, each with probability $1/2$. A quick calculation yields $\Expect V^\star_{R, \Lambda} = -1/4$ and $\VBOSfn_{\phi, \Lambda}(\pi^\mathrm{TS,2}) = -1/2$, so with probability $1/2$ TS will produce a policy \emph{not} in the optimistic set, since
$\VBOSfn_{\phi, \Lambda}(\pi^\mathrm{TS,2}) < \Expect V^\star_{R, \Lambda}$. Now, consider the case where $r=1$ and the opponent plays the strategy $[0, 1]$ in every round. Then every time TS produces $\pi^\mathrm{TS,2}$ it will incur expected regret of $1/4$. Since the value of $r$ is never revealed the uncertainty is never resolved and so this pattern continues forever. This manifests as linear regret for TS in this instance. By contrast VBOS always produces $\pi^\mathrm{VBOS} = [1, 0]$ and so suffers no regret.

In Figure \ref{fig-simplex-game} we show a realization of $\thompsonset_{\phi, \Lambda}^t$ over time for an agent following the VBOS policy in a randomly generated $3 \times 3$ zero-sum two-player game. Observe that the TS algorithm has produced policies outside the optimistic set, but VBOS remains inside at all times. Note that this does not \emph{prove} that TS will suffer linear regret, since membership of the optimistic set is only a sufficient, not necessary, condition.

\begin{figure}[ht]
\centering
\begin{subfigure}{0.32\textwidth}
\centering
  \includegraphics[width=\linewidth]{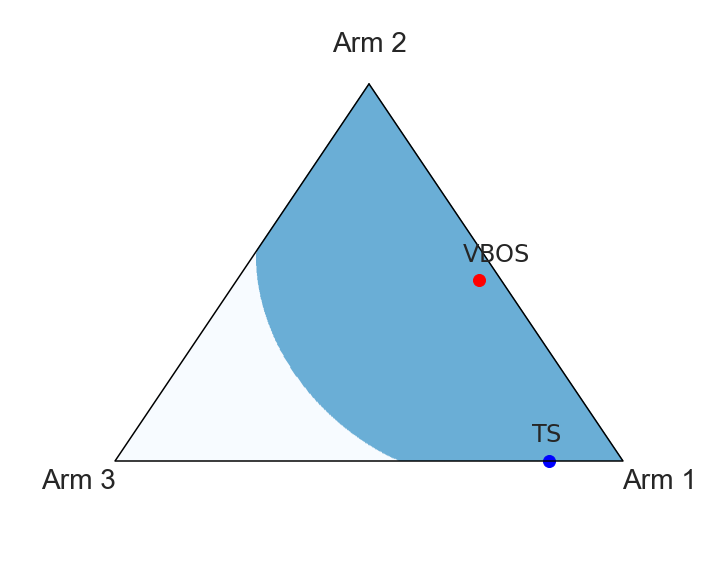}
   \caption{$t=1$}
\end{subfigure}
\begin{subfigure}{0.32\textwidth}
\centering
  \includegraphics[width=\linewidth]{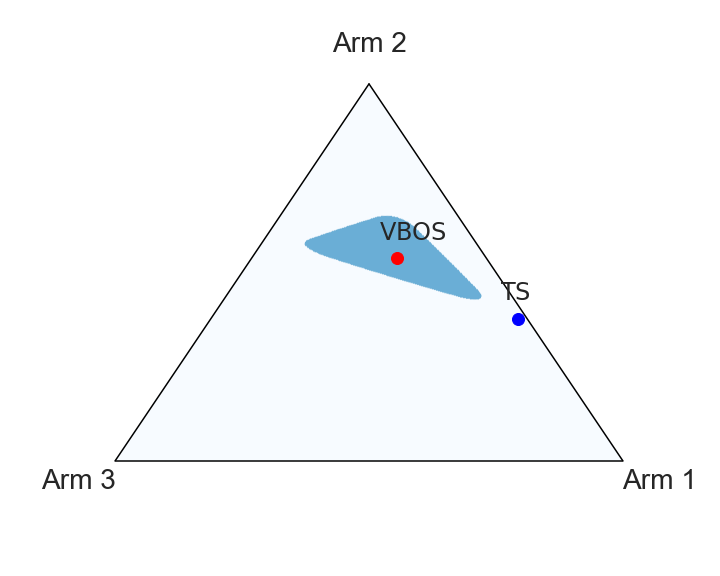}
   \caption{$t=32$}
\end{subfigure}
\begin{subfigure}{0.32\textwidth}
\centering
  \includegraphics[width=\linewidth]{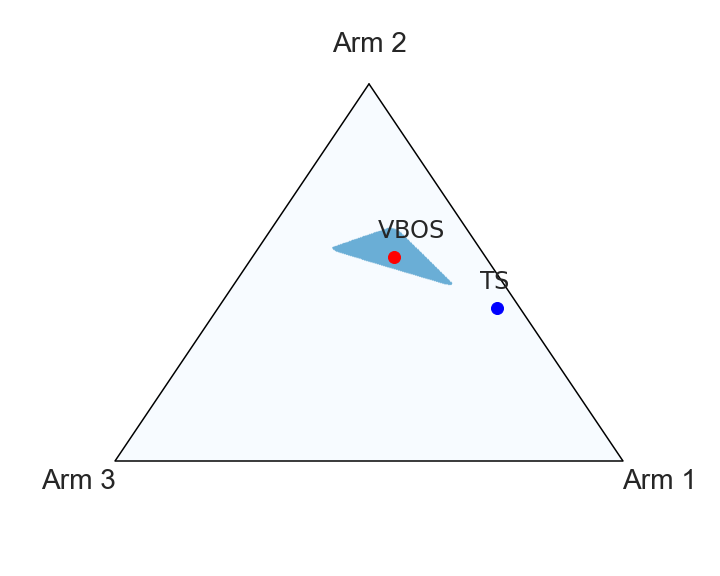}
   \caption{$t=256$}
\end{subfigure}
\caption{Policy simplex for a bilinear saddle-point problem. The shaded area is the optimistic set $\thompsonset_{\phi, \Lambda}^t$. The blue dot is the policy generated by TS and the red dot is the policy generated by VBOS. Notice that TS has produced policies outside of the optimistic set.}
\label{fig-simplex-game}
\end{figure}

\subsection{Constrained bandits}
In constrained bandits we want to find the policy that maximizes expected reward subject to a set of constraints. The constraints specify that the expected value of other reward functions associated with the same arms, which are also initially unknown, are larger than zero.  Formally, we have the usual unknown reward vector $\mu_1 \in \reals^\xdim$ we want to maximize as well as $m-1$ other unknown reward vectors $\mu_2, \ldots, \mu_m \in \reals^\xdim$ that define the constraints. When there is zero uncertainty we want to solve the following primal-dual linear-programming pair
\[
  \begin{array}{lr}
  \begin{array}{ll}
  \mbox{maximize} & \mu_1^\top \pi\\
  \mbox{subject to} & \mu_i^\top \pi \geq 0, \quad i=2, \ldots, m\\
  & \pi \in \Delta_\xdim
  \end{array}
    & \quad
    \begin{array}{ll}
  \mbox{minimize} & V\\
  \mbox{subject to} & \sum_{i=1}^m \mu_i \lambda_i  \leq  V \ones\\
  & \lambda \geq 0, \lambda_1 = 1,
  \end{array}
  \end{array}
\]
over primal variable $\pi \in \Delta_A$ and dual variables $V \in \reals$ and $\lambda \in \reals^m$, and where $\ones$ denotes the vector of all ones. We denote by $(\pi^\star, \lambda^\star)$ any optimal primal-dual points.

When there is uncertainty in the reward vectors, then we must be careful to ensure that the Bayesian regret is well-defined. If the support of the prior includes infeasible problems, which have optimal value $-\infty$, then the expectation of the regret under that prior is unbounded below. However, if we make the assumption that $\|\lambda^{\star}\|_2 \leq C$ for some $C > 0$ almost surely, which implies that the problem is feasible almost surely, then the Bayesian regret is well-defined and we can derive a regret bound that will depend on $C$. With this in mind we define the set $\Lambda = \{\lambda \in \reals^m \mid \lambda_1 = 1, \lambda \geq 0, \|\lambda\|_2 \leq C\}$. Let $R \in \reals^{m \times \xdim}$ be the matrix with the reward vectors $\mu_1, \ldots, \mu_m$ stacked row-wise, then we can write the Lagrangian for the above primal-dual pair as Equation \eqref{e-vonneumann}, \ie, this problem is a constrained bilinear saddle-point problem in terms of the primal-dual variables $\pi$ and $\lambda$.

At each round $t$ the agent produces a policy $\pi^t$ from which a column is sampled and the agent receives observation vector $o_t = (\mu_{1i} + \eta^t_{1i}, \mu_{2i} + \eta^t_{2i}, \ldots, \mu_{mi} + \eta^t_{mi})$ where $i$ is the index of the sampled column and $\eta^t_{ij}$ is zero-mean noise for entry $i, j$. We define the reward at each time-period to be $r_t = \min_{\lambda \in \Lambda} \lambda^\top R \pi^t$. We choose this reward for two reasons, firstly the regret with this reward is always non-negative, and secondly if the regret is zero then the corresponding policy $\pi^t$ is optimal for the linear program since it is a maximizer of the saddle-point problem \cite{boyd2004convex}. Unlike the previous examples the agent never observes this reward since it depends on exact knowledge of $R$ to compute, the reward is only used to measure regret. Nevertheless, we have the following Bayesian regret bound for policies in the optimistic set.
\begin{restatable}{thm}{lregretconstrained}
\label{l-regret_constrained}
Let $\rm alg$ produce any sequence of
policies $\pi^t$, $t=1,\ldots,T$, that satisfy $\pi^t \in \thompsonset_{\phi, \Lambda}^t$ and assume that the prior over each entry of $R$ and reward noise are $1$-sub-Gaussian and that $\|\lambda^{\star}\|_2 \leq C$ almost surely, then
\[
  \bayesregret(\phi, {\rm alg}, T) \leq C \left(\sqrt{2 \log \xdim(1+ \log T)} +2 \sqrt{m} \right) \sqrt{\xdim T} = \tilde O(\sqrt{m \xdim T}).
\]
\end{restatable}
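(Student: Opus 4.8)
The plan is to reduce the constrained-bandit regret to the generic optimistic-set bound (Lemma \ref{l-games-bound} and its regret consequence, analogous to Lemma \ref{lem-generic}), and then to control the resulting inverse-rate-function sum using the $1$-sub-Gaussian assumption together with the constraint $\|\lambda^\star\|_2 \leq C$. First I would write the per-round regret as $\Expect^t V^\star_{R,\Lambda} - r_t$. By the saddle-point reward definition $r_t = \min_{\y \in \Lambda} \y^\top R \x^t$, and by Lemma \ref{l-games-bound} we have $\Expect^t V^\star_{R,\Lambda} \leq \VBOSfn^t_{\phi,\Lambda}(\x^t)$ whenever $\pi^t \in \thompsonset^t_{\phi,\Lambda}$. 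The key algebraic step is to subtract these: expanding $\VBOSfn^t_{\phi,\Lambda}$ at its minimizing $(\y,\tau)$ and using that the posterior-mean term $\y^\top \Expect^t R \x^t$ is (by the tower property) the conditional expectation of the observed quantity, the difference collapses to the exploration/bonus term $\sum_j \x^t_j \tau_j \cgf^t_j(\y/\tau_j) - \tau_j \x^t_j \log \x^t_j$, plus a correction governed by how $\y$ enters the cumulant generating function of the full $m$-dimensional column $R_j$.

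Next I would bound the bonus term. Under the $1$-sub-Gaussian assumption each scalar posterior $\y^\top R_j \mid \Fc_t$ is sub-Gaussian with a variance proxy that scales like $\|\y\|_2^2 / N^t_j$, where $N^t_j$ is the (effective) number of pulls of arm $j$; since $\y \in \Lambda$ forces $\|\y\|_2 \leq C$, the inverse rate function $\irate{j}{t}$ inherits the bound $\irate{j}{t}(-\log \x^t_j) \leq C\sqrt{-2\log \x^t_j / N^t_j}$, exactly as in the scalar bandit case but with the extra factor $C$ coming from the dual norm. The remaining structural difference from Theorem \ref{thm-brbound} is the $\sqrt{m}$ term: this arises because the $m$ constraint rows each contribute sub-Gaussian fluctuations to $\y^\top R_j$, so $\|\y\|_2 \leq C$ must be paired with the $\ell_2$-to-$\ell_1$ passage across the $m$ rows, yielding the additive $2\sqrt{m}$ inside the bracket.

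The summation over rounds would then proceed exactly as in the proof of Theorem \ref{thm-brbound}: substitute the per-round bound, apply the pigeonhole/concentration argument that $\Expect \sum_t 1/\sqrt{N^t_{a_t}} \leq \sqrt{A T}$ via Cauchy–Schwarz and the fact that each arm's pull-count telescopes, and absorb the $\log A (1+\log T)$ factor from the $-\log \x^t_j \leq \log A$ worst case together with the harmonic-sum bound on the schedule of effective sample sizes. Collecting the $C$, the $\sqrt{2\log A(1+\log T)}$ from the optimism bonus, and the $2\sqrt{m}$ from the dual dimension gives the stated $C(\sqrt{2\log A(1+\log T)} + 2\sqrt{m})\sqrt{AT}$ bound.

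The main obstacle I anticipate is the second step: controlling the multivariate cumulant generating function $\cgf^t_j$ of the $m$-dimensional column $R_j$ evaluated at the scaled dual variable $\y/\tau_j$, and showing that restricting $\y$ to $\Lambda$ (with $\|\y\|_2 \leq C$) gives precisely the $C\sqrt{\cdot}$ scaling without losing an extra factor of $\sqrt{m}$ in the leading term. In the scalar case Theorem \ref{thm-irate} gives the clean closed form $\irate{}{}(\cdot) = \sigma\sqrt{2(\cdot)}$; here I must verify that the $\min_{\y,\tau}$ in $\VBOSfn^t_{\phi,\Lambda}$ selects a $\y$ whose effect is captured by the worst-case $\|\y\|_2 = C$ and that the two sources of error — the optimism bonus and the feasibility slack from the dual constraint set — combine additively rather than multiplicatively, which is what separates the $\sqrt{2\log A(1+\log T)}$ term from the $2\sqrt{m}$ term in the final bound.
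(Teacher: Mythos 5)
Your outline correctly reproduces one half of the paper's argument: bounding $\Expect^t V^\star_{R,\Lambda}$ by $\VBOSfn^t_{\phi,\Lambda}(\pi^t)$ via Lemma~\ref{l-games-bound} for $\pi^t \in \thompsonset^t_{\phi,\Lambda}$, substituting a specific dual variable into the minimization, bounding the multivariate cumulant generating function by $\|\lambda\|_2^2/(2(n^t_j+1)) \leq C^2/(2(n^t_j+1))$, and finishing with Cauchy--Schwarz, the entropy bound, and the pigeonhole lemma. That part is sound and yields the $C\sqrt{2\xdim T\log\xdim(1+\log T)}$ contribution.

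The genuine gap is the step where you claim that, by the tower property, the difference between $\VBOSfn^t_{\phi,\Lambda}(\pi^t)$ and the (conditional expectation of the) reward ``collapses to the exploration/bonus term.'' The reward here is $r_t = \min_{\lambda\in\Lambda}\lambda^\top R\pi^t = (\lambda^{t\star})^\top R\pi^t$, where $\lambda^{t\star} = \argmin_{\lambda\in\Lambda}\lambda^\top R\pi^t$ is itself a random variable depending on the unknown realized $R$. Consequently $\Expect^t\bigl[(\lambda^{t\star})^\top R\pi^t\bigr]$ does \emph{not} factor as $(\Expect^t\lambda^{t\star})^\top(\Expect^t R)\pi^t$: the dual optimizer and the matrix are correlated under the posterior, and no tower-property manipulation removes this. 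The paper's proof isolates this coupling explicitly: after substituting $\lambda = \Expect^t\lambda^{t\star}$ it splits the per-round regret into (i) the decoupled bonus term, which is what your sketch covers and which carries \emph{no} $m$-dependence (only $\|\Expect^t\lambda^{t\star}\|_2\leq C$ enters the CGF bound), and (ii) the decoupling error $(\pi^t)^\top\Expect^t R\,\Expect^t\lambda^{t\star} - (\pi^t)^\top\Expect^t(R\lambda^{t\star})$, which is bounded via Cauchy--Schwarz by $\sum_i\pi^t_i\sqrt{\Expect^t\|R_i-\Expect^t R_i\|_2^2\,\Expect^t\|\lambda^{t\star}\|_2^2}\leq C\sum_i\pi^t_i\sqrt{m/(n^t_i+1)}$, giving $2C\sqrt{m\xdim T}$ after a pigeonhole argument. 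So the $2\sqrt{m}$ in the theorem is the price of this statistical coupling between $\lambda^{t\star}$ and $R$, not --- as your sketch asserts --- an ``$\ell_2$-to-$\ell_1$ passage across the $m$ rows'' inside the bonus, and it is not a ``feasibility slack'' either. Without identifying and bounding term (ii), the argument cannot close: the obstacle you flag at the end (whether the two error sources combine additively) is resolved precisely by this decomposition, which is the one idea your proposal is missing.
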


Next we present a numerical experiment with $n = 50$ variables and $m = 25$ constraints.
The entries of $R$ were sampled from prior $\Nc(-0.15,1)$, and we repeatedly sampled from this distribution until a feasible problem was found. We chose a slightly negative bias in the prior for $R$ in order to make the problem of finding a feasible policy more challenging. All noise was sampled from $\Nc(0,1)$. For this experiment we set $C = 10$; the true optimal dual variable satisfied $\|\lambda^\star\| \approx 7$. In Figure \ref{fig-constrained} we compare the performance of TS Algorithm \ref{alg-ts-bilinear}, VBOS Algorithm \ref{alg-VBOS-bilinear}, and K-learning \cite{o2018variational} over 8 seeds.
We also plot how much the policy produced by each algorithm violated the constraints at each time-period, as measured by $\ell_\infty$ norm. As can be seen both VBOS and K-learning enjoy sub-linear regret and have their constraint violation converging to zero, with VBOS slightly outperforming K-learning on both metrics. TS on the other hand appears to suffer from linear regret, and the constraint violation decreases up to a point after which it saturates at a non-zero value. It is clear that TS is unable to adequately trade off exploration and exploitation in this problem.

\begin{figure}[h]
\centering
\begin{subfigure}{0.45\textwidth}
\centering
  \includegraphics[width=\linewidth]{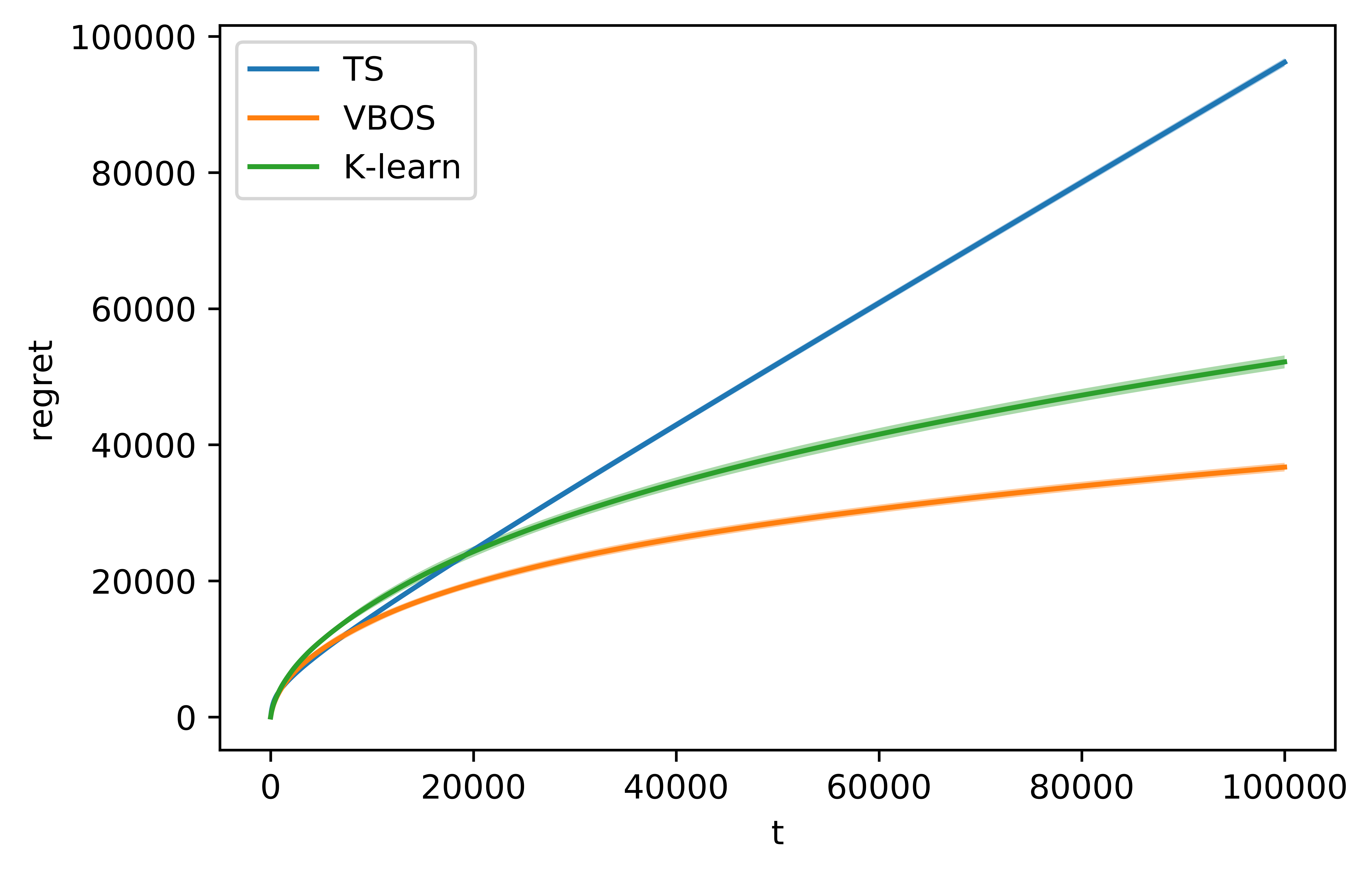}
  \caption{Regret.}
\label{fig-constrained-regret}
\end{subfigure}
\begin{subfigure}{0.45\textwidth}
\centering
  \includegraphics[width=\linewidth]{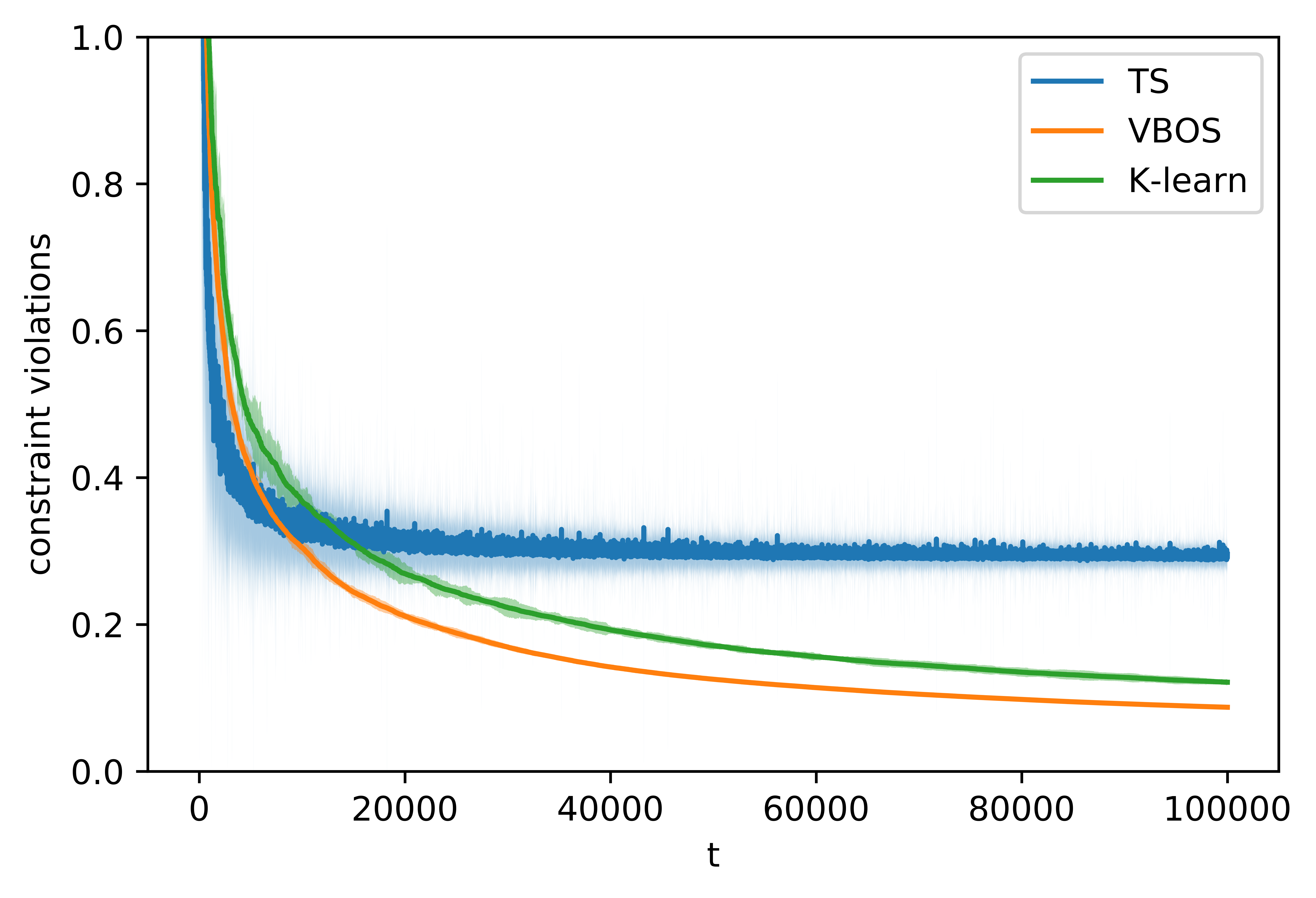}
  \caption{Constraint violation.}
\label{fig-constrained-violations}
\end{subfigure}
\caption{Performance on a random $50 \times 25$ constrained bandit problem.}
\label{fig-constrained}
\end{figure}

\section{Conclusion}
In this manuscript we introduced variational Bayesian optimistic sampling (VBOS), which
is an algorithm that produces policies by solving an optimization problem in each round of a sequential decision problem. The optimization problem that must be solved is always convex, no matter how complicated the posteriors are. We showed that VBOS enjoys low regret for several problem families, including the well-known multi-armed bandit problem as well as more exotic problems like constrained bandits where TS can fail. 

We conclude with some discussion about future directions for this work. First, in follow-up work we intend to show that VBOS is applicable in structured and multi-period problems like Markov decision processes, and again prove Bayesian regret guarantees. Second, an avenue we did not explore in this work was to parameterize the policy to be in a certain family, \eg, linear in some basis. The variational formulation naturally extends to this case, where the optimization problem is in terms of the basis coefficients (which maintains convexity in the linear case). Combining these two suggests using the variational formulation to find the linearly parameterized VBOS policy in an MDP, which amounts to solving a convex optimization problem that may be much smaller than solving the corresponding problem without the parameterization. We expect that a regret bound is possible in this case under certain assumptions on the basis. Finally, since the VBOS objective is \emph{differentiable} this opens the door to using differentiable computing architectures, such as deep neural networks trained using backpropagation. We speculate that it may be possible to apply VBOS in deep reinforcement learning to tackle the challenging exploration-exploitation tradeoff in those domains.

\begin{ack}
The authors would like to thank Ben van Roy, Satinder Singh, and Zheng Wen for discussions and feedback. The authors received no specific funding for this work.
\end{ack}

\bibliographystyle{abbrv}
\bibliography{vbos}

\begin{thebibliography}{10}

\bibitem{agrawal2012analysis}
S.~Agrawal and N.~Goyal.
\newblock Analysis of {T}hompson sampling for the multi-armed bandit problem.
\newblock In {\em Conference on learning theory}, pages 39--1, 2012.

\bibitem{auer2002finite}
P.~Auer, N.~Cesa-Bianchi, and P.~Fischer.
\newblock Finite-time analysis of the multiarmed bandit problem.
\newblock {\em Machine learning}, 47(2-3):235--256, 2002.

\bibitem{exp3}
P.~Auer, N.~Cesa-Bianchi, Y.~Freund, and R.~E. Schapire.
\newblock The nonstochastic multiarmed bandit problem.
\newblock {\em SIAM journal on computing}, 32(1):48--77, 2002.

\bibitem{boyd2004convex}
S.~Boyd and L.~Vandenberghe.
\newblock {\em Convex optimization}.
\newblock Cambridge university press, 2004.

\bibitem{cesa2006prediction}
N.~Cesa-Bianchi and G.~Lugosi.
\newblock {\em Prediction, learning, and games}.
\newblock Cambridge university press, 2006.

\bibitem{chapelle2011empirical}
O.~Chapelle and L.~Li.
\newblock An empirical evaluation of {T}hompson sampling.
\newblock In {\em Advances in neural information processing systems}, pages
  2249--2257, 2011.

\bibitem{danskin2012theory}
J.~M. Danskin.
\newblock {\em The theory of max-min and its application to weapons allocation
  problems}, volume~5.
\newblock Springer Science \& Business Media, 2012.

\bibitem{fuchs1970inegalite}
A.~Fuchs and G.~Letta.
\newblock L'in{\'e}galit{\'e} de {K}ullback. application {\`a} la th{\'e}orie
  de l'estimation.
\newblock {\em S{\'e}minaire de probabilit{\'e}s de Strasbourg}, 4:108--131,
  1970.

\bibitem{garivier2011kl}
A.~Garivier and O.~Capp{\'e}.
\newblock The kl-ucb algorithm for bounded stochastic bandits and beyond.
\newblock In {\em Proceedings of the 24th annual conference on learning
  theory}, pages 359--376. JMLR Workshop and Conference Proceedings, 2011.

\bibitem{ghavamzadeh2015bayesian}
M.~Ghavamzadeh, S.~Mannor, J.~Pineau, and A.~Tamar.
\newblock Bayesian reinforcement learning: {A} survey.
\newblock {\em Foundations and Trends{\textregistered} in Machine Learning},
  8(5-6):359--483, 2015.

\bibitem{gray2011entropy}
R.~M. Gray.
\newblock {\em Entropy and information theory}.
\newblock Springer Science \& Business Media, 2011.

\bibitem{haarnoja2017reinforcement}
T.~Haarnoja, H.~Tang, P.~Abbeel, and S.~Levine.
\newblock Reinforcement learning with deep energy-based policies.
\newblock In {\em International Conference on Machine Learning}, pages
  1352--1361. PMLR, 2017.

\bibitem{honda2019note}
J.~Honda.
\newblock A note on {KL-UCB+} policy for the stochastic bandit.
\newblock {\em arXiv preprint arXiv:1903.07839}, 2019.

\bibitem{jorgensen1997theory}
B.~Jorgensen.
\newblock {\em The theory of dispersion models}.
\newblock CRC Press, 1997.

\bibitem{kaufmann2012bayesian}
E.~Kaufmann, O.~Capp{\'e}, and A.~Garivier.
\newblock On {B}ayesian upper confidence bounds for bandit problems.
\newblock In {\em Artificial Intelligence and Statistics}, pages 592--600,
  2012.

\bibitem{lai1987adaptive}
T.~L. Lai.
\newblock Adaptive treatment allocation and the multi-armed bandit problem.
\newblock {\em The Annals of Statistics}, pages 1091--1114, 1987.

\bibitem{lai1985asymptotically}
T.~L. Lai and H.~Robbins.
\newblock Asymptotically efficient adaptive allocation rules.
\newblock {\em Advances in applied mathematics}, 6(1):4--22, 1985.

\bibitem{lattimore2018bandit}
T.~Lattimore and C.~Szepesv{\'a}ri.
\newblock {\em Bandit algorithms}.
\newblock Cambridge University Press, 2020.

\bibitem{levine2018rlasinf}
S.~Levine.
\newblock Reinforcement learning and control as probabilistic inference:
  {T}utorial and review.
\newblock {\em arXiv preprint arXiv:1805.00909}, 2018.

\bibitem{lewis1997introduction}
J.~T. Lewis and R.~Russell.
\newblock An introduction to large deviations for teletraffic engineers, 1997.

\bibitem{mazumdar2020thompson}
E.~Mazumdar, A.~Pacchiano, Y.-a. Ma, P.~L. Bartlett, and M.~I. Jordan.
\newblock On {T}hompson sampling with {L}angevin algorithms.
\newblock {\em arXiv preprint arXiv:2002.10002}, 2020.

\bibitem{o2018variational}
B.~O'Donoghue.
\newblock Variational {B}ayesian reinforcement learning with regret bounds.
\newblock {\em arXiv preprint arXiv:1807.09647}, 2018.

\bibitem{o2021operator}
B.~O'Donoghue.
\newblock Operator splitting for a homogeneous embedding of the linear
  complementarity problem.
\newblock {\em SIAM Journal on Optimization}, 31(3):1999--2023, 2021.

\bibitem{ocpb:16}
B.~O'Donoghue, E.~Chu, N.~Parikh, and S.~Boyd.
\newblock Conic optimization via operator splitting and homogeneous self-dual
  embedding.
\newblock {\em Journal of Optimization Theory and Applications},
  169(3):1042--1068, June 2016.

\bibitem{scs}
B.~O'Donoghue, E.~Chu, N.~Parikh, and S.~Boyd.
\newblock {SCS}: Splitting conic solver, version 2.1.0.
\newblock \url{https://github.com/cvxgrp/scs}, Nov. 2017.

\bibitem{o2020stochastic}
B.~O'Donoghue, T.~Lattimore, and I.~Osband.
\newblock Stochastic matrix games with bandit feedback.
\newblock {\em arXiv preprint arXiv:2006.05145}, 2020.

\bibitem{o2016pgq}
B.~O'Donoghue, R.~Munos, K.~Kavukcuoglu, and V.~Mnih.
\newblock Combining policy gradient and {Q}-learning.
\newblock In {\em International Conference on Learning Representations (ICLR)},
  2017.

\bibitem{o2020making}
B.~O'Donoghue, I.~Osband, and C.~Ionescu.
\newblock Making sense of reinforcement learning and probabilistic inference.
\newblock In {\em International Conference on Learning Representations (ICLR)},
  2020.

\bibitem{osband2013more}
I.~Osband, D.~Russo, and B.~Van~Roy.
\newblock ({M}ore) efficient reinforcement learning via posterior sampling.
\newblock In {\em Advances in Neural Information Processing Systems}, pages
  3003--3011, 2013.

\bibitem{osband2016posterior}
I.~Osband and B.~Van~Roy.
\newblock Why is posterior sampling better than optimism for reinforcement
  learning.
\newblock In {\em Proceedings of the 34th International Conference on Machine
  Learning ({ICML})}, 2017.

\bibitem{phan2019thompson}
M.~Phan, Y.~A. Yadkori, and J.~Domke.
\newblock Thompson sampling and approximate inference.
\newblock In {\em Advances in Neural Information Processing Systems}, pages
  8801--8811, 2019.

\bibitem{richemond2017short}
P.~H. Richemond and B.~Maginnis.
\newblock A short variational proof of equivalence between policy gradients and
  soft {Q}-learning.
\newblock {\em arXiv preprint arXiv:1712.08650}, 2017.

\bibitem{russo2016information}
D.~Russo and B.~Van~Roy.
\newblock An information-theoretic analysis of {T}hompson sampling.
\newblock {\em The Journal of Machine Learning Research}, 17(1):2442--2471,
  2016.

\bibitem{russo2018tutorial}
D.~J. Russo, B.~Van~Roy, A.~Kazerouni, I.~Osband, and Z.~Wen.
\newblock A tutorial on {T}hompson sampling.
\newblock {\em Foundations and Trends{\textregistered} in Machine Learning},
  11(1):1--96, 2018.

\bibitem{serrano2015algorithms}
S.~A. Serrano.
\newblock {\em Algorithms for unsymmetric cone optimization and an
  implementation for problems with the exponential cone}.
\newblock PhD thesis, Stanford University, 2015.

\bibitem{thompson1933likelihood}
W.~R. Thompson.
\newblock On the likelihood that one unknown probability exceeds another in
  view of the evidence of two samples.
\newblock {\em Biometrika}, 25(3/4):285--294, 1933.

\bibitem{van2000asymptotic}
A.~W. Van~der Vaart.
\newblock {\em Asymptotic statistics}, volume~3.
\newblock Cambridge university press, 2000.

\bibitem{varadhan1984large}
S.~S. Varadhan.
\newblock {\em Large deviations and applications}.
\newblock SIAM, 1984.

\bibitem{von2007theory}
J.~Von~Neumann and O.~Morgenstern.
\newblock {\em Theory of games and economic behavior (commemorative edition)}.
\newblock Princeton university press, 2007.

\end{thebibliography}

\newpage
\appendix

\section{Appendix}

\lcondexp*
\begin{proof}
Starting with the definition of conditional expectation, for any $\tau \geq 0$ we have
\begin{align*}
  \Expect [\mu | A] &= \frac{1}{\Prob(A)}\int_A \mu(\omega) d\Prob(\omega)\\
  &= \frac{\tau}{\Prob(A)} \int_A \log \exp (\mu(\omega)/\tau) d\Prob(\omega)\\
  &\leq \tau \log \left(\frac{1}{\Prob(A)}\int_A \exp (\mu(\omega)/\tau) d\Prob(\omega)\right)\\
  &= \tau \log \int_A \exp (\mu(\omega)/\tau) d\Prob(\omega) - \tau \log \Prob(A)\\
  &\leq \tau \log \int_{\Omega} \exp (\mu(\omega)/\tau)d\Prob(\omega) - \tau \log \Prob(A)\\
  &= \Expect \mu + \tau \cgf_\mu(1/\tau) -\tau \log \Prob(A),
\end{align*}
where we used Jensen's inequality in the third line.
\end{proof}

\thmirate*
\begin{proof}
This is a combination of Lemma~\ref{l-condexp1} and Lemma~\ref{l-conjpersp} (presented next). Lemma~\ref{l-conjpersp} applies because $\cgf_X$ is Legendre type on $\reals_+$ since the interior of the domain of $\cgf_X$ is non-empty \cite[Thm.~2.3]{jorgensen1997theory}, and we have $\cgf_X(0) = 0$ and $\cgf_\mu^\prime(0) = 0$.
\end{proof}

\begin{restatable}{lem}{lconjpersp}
\label{l-conjpersp}
Let $f: \reals_+ \rightarrow \reals$ be Legendre type
with $f(0) = 0$, $f^\prime(0) = 0$ and denote the convex conjugate of $f$ as $f^*:\reals
\rightarrow \reals$, \ie,
\[
  f^*(p) = \sup_{x \geq 0} \left(xp - f(x)\right).
\]
Then for $y \geq 0$
\[
  \inf_{\tau \geq 0} \left(\tau f(1/\tau) + \tau y \right) = (f^*)^{-1}(y).
\]
\end{restatable}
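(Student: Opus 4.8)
The plan is to reduce the claim to a one-dimensional minimization that can be recognized as the Fenchel--Young inequality in disguise. First I would make the substitution $s = 1/\tau$, so that as $\tau$ ranges over $(0,\infty)$ so does $s$, and the objective becomes
\[
  \tau f(1/\tau) + \tau y = \frac{f(s) + y}{s}.
\]
This turns the statement into $\inf_{s>0} (f(s)+y)/s = (f^*)^{-1}(y)$. Before bounding anything I would check that the target value $p_0 := (f^*)^{-1}(y)$ is well defined: since $f$ is Legendre type with $f(0)=0$ and $f'(0)=0$, the function $f$ attains its minimum value $0$ at the origin, so $f(x)\geq 0$ on $\reals_+$, which forces $f^*(0) = \sup_{x\geq 0}(0 - f(x)) = 0$. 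Moreover $f^*$ is continuous and strictly increasing on $\reals_+$ (its derivative is the maximizing $x$, which is positive for positive arguments), so for each $y \geq 0$ there is a unique $p_0 \geq 0$ with $f^*(p_0)=y$.

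For the lower bound I would invoke the Fenchel--Young inequality $f(s) + f^*(p) \geq ps$, valid for all $s \geq 0$ and all $p$. Choosing $p = p_0$ and using $f^*(p_0)=y$ gives $f(s) + y \geq p_0 s$ for every $s>0$, hence $(f(s)+y)/s \geq p_0$ and therefore $\inf_{s>0}(f(s)+y)/s \geq p_0$. For the matching upper bound I would use the equality case of Fenchel--Young: because $f$ is Legendre type there is a point $s_0$ with $p_0 = f'(s_0)$ at which the supremum defining $f^*(p_0)$ is attained, so $f^*(p_0) = p_0 s_0 - f(s_0)$, \ie\ $f(s_0)+y = p_0 s_0$ and thus $(f(s_0)+y)/s_0 = p_0$. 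Combining the two bounds yields $\inf_{s>0}(f(s)+y)/s = p_0 = (f^*)^{-1}(y)$.

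The main obstacle, and the place where the Legendre-type hypothesis does the real work, is confirming that $p_0$ is well defined and that the equality case is genuinely attained at an admissible point. The degenerate situation is $y=0$, where $p_0 = 0$ and the candidate minimizer $s_0$ slides to the boundary of $\reals_+$; here I would argue by the limit $s \to 0^+$ instead, using $f(0)=0$ and $f'(0)=0$ to obtain $(f(s)+0)/s \to f'(0) = 0$ while $f(s)/s \geq 0$ throughout, so that the infimum is $0 = (f^*)^{-1}(0)$. I would also need to verify that restricting the supremum in $f^*$ to $x \geq 0$ rather than to all of $\reals$ leaves these identities intact, which follows from $f'(0)=0$ guaranteeing that the unconstrained maximizer already lies in $\reals_+$ for every $p_0 \geq 0$.
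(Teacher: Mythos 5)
Your proof is correct and follows essentially the same route as the paper's: establish that $(f^*)^{-1}(y)$ is well defined from $f(0)=0$, $f'(0)=0$ and the Legendre property, get the lower bound from the Fenchel--Young inequality at $p_0=(f^*)^{-1}(y)$, and get the matching upper bound from the equality case of Fenchel--Young, treating $y=0$ separately. The only differences are cosmetic: you work in the variable $s=1/\tau$ rather than $\tau$ itself, and you spell out the $y=0$ limit argument that the paper dismisses as ``straightforward.''
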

\begin{proof}
The $y=0$ case is straightforward, so consider the case where $y > 0$.
First, $f^*$ is Legendre type by the assumption that $f$ is Legendre type
(loosely, differentiable and strictly convex on the interior of its domain)
\cite[\S 26.4]{lattimore2018bandit}. Since
$f(0) = 0$ and $f^\prime(0) = 0$ we have that $f^*(0) = 0$ and
$(f^*)^\prime(0) = 0$.  These imply that $f^*$ is strictly increasing
and continuous on $\dom f^* \cap \reals_+$ and has range $\reals_+$, so the quantity
$(f^*)^{-1}(y)$ is well-defined for any $y \geq 0$ and moreover $(f^*)^{-1}(y)
  \geq 0$.
The Fenchel-Young inequality states that
\[
f(1/\tau) + f^*(p) \geq  p / \tau,
\]
for any $p, \tau \in \reals_+$ with equality if and only if $(1/\tau) =
(f^*)^\prime(p)$. Fix $p = (f^*)^{-1}(y)$ and note that $p > 0$ since $y > 0$.
Then for $\tau \geq 0$ we have
\[
  \tau f(1/\tau) + \tau y \geq (f^*)^{-1}(y).
\]
Equality is attained by $\tau = 1/(f^*)^\prime(p)$ with $0 \leq \tau < \infty$,
since $y > 0$, $p \in \dom f^* \cap \reals_+$, and $f^*$ is strictly increasing
on $\dom f^* \cap \reals_+$.

\end{proof}

\lexpmax*
\begin{proof}
This follows directly from the definition of the maximum and
  Theorem~\ref{thm-irate}.
\begin{align*}
  \Expect \max_i \mu_i &= \sum_{i=1}^A \Prob(i^\star = i) \Expect[\mu_i | i^\star = i]
  \leq \sum_{i=1}^A \Prob(i^\star = i) \left( \Expect \mu_i + \irate{i}{}(-\log
  \Prob(i^\star = i) \right).
\end{align*}
\end{proof}

\lemgenericboundd*
\begin{proof}
Starting with the definition of Bayesian regret in Equation \eqref{eq:bayes-regret},
\begin{align*}
  \bayesregret(\phi, {\rm alg}, T)  &= \Expect \sum_{t=1}^T\left(\max_i \mu_i - r_t\right)\\
  &=\Expect \sum_{t=1}^T\left(\Expect^t \max_i \mu_i - \sum_{i=1}^A \pi_i^t
  \Expect^t \mu_i\right)\\
  &\leq \Expect \sum_{t=1}^T  \sum_{i=1}^A \pi^t_i \irate{i}{t}(-\log \pi^t_i)
\end{align*}
  which follows from the tower property of
  conditional expectation and lemma~\ref{l-exp_max}.
\end{proof}
\thmbrbound*
\begin{proof}
Since the prior and noise terms are $1$-sub-Gaussian for each arm, we can bound the cumulant generating function of $\mu_i$ at time $t$
as
\begin{equation}
\label{e-asssubg}
  \cgf^t_{i}(\beta) \leq \frac{\beta^2}{2 (n^t_{i} +1)},
\end{equation}
where $n^t_i$ is the number of observations of arm $i$ before time $t$.
A quick calculation yields the following bound for $y \geq 0$
\begin{equation}
\label{e-subgirate}
  \irate{i}{t}(y) \leq \sqrt{\frac{2y}{n^t_i + 1}}.
\end{equation}
Combining this with Lemma \eqref{lem-generic},
\begin{align*}
  \bayesregret(\phi, {\rm alg}, T)
  &\leq \Expect \sum_{t=1}^T  \sum_{i=1}^A \pi^t_i \irate{i}{t}(-\log \pi^t_i)\\
  &\leq \Expect \sum_{t=1}^T  \sum_{i=1}^A \pi^t_i \sqrt{\frac{-2\log\pi_t}{n_i^t + 1}}\\
  &\leq \Expect  \sqrt{\sum_{t=1}^T  H(\pi^t) \sum_{t=1}^T  \sum_{i=1}^A \frac{2\pi_t}{n_i^t + 1}}
\end{align*}
  which follows from Cauchy-Scwarz.
  To conclude the proof we use the fact that $H(\pi_t) \leq \log(A)$ and a pigeonhole principle included as Lemma \ref{l-pigeonhole}.
\end{proof}
\begin{restatable}{lem}{lpigeonhole}
\label{l-pigeonhole}
Consider a process that at each time $t$ selects a single index $a_t$
from $\{1, \ldots, q\}$ with probability $p^t_{a_t}$. Let $n^t_{i}$ denote the
count of the number of times index $i$ has been selected before time $t$. Then
\[
  \Expect \sum_{t=1}^T \sum_{i=1}^{q} \frac{p^t_{i}}{n^t_{i} + 1} \leq q(1+\log T).
\]
\end{restatable}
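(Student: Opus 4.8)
The plan is to replace the probabilities $p^t_i$ by the realized selection indicators and then argue pathwise. The key observation is that $n^t_i$, the count \emph{before} time $t$, is a function of the history available at the start of round $t$, whereas the draw at round $t$ satisfies $\Expect[\mathbf{1}(a_t = i) \mid \Fc_t] = p^t_i$. Pulling the $\Fc_t$-measurable factor $1/(n^t_i+1)$ out of the conditional expectation and using the tower property gives
\[
  \Expect \frac{p^t_i}{n^t_i + 1} = \Expect \frac{\mathbf{1}(a_t=i)}{n^t_i+1},
\]
so it suffices to bound $\Expect \sum_{t=1}^T \sum_{i=1}^q \mathbf{1}(a_t=i)/(n^t_i+1)$.

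Next I would fix a single sample path and a single index $i$ and evaluate $\sum_{t=1}^T \mathbf{1}(a_t=i)/(n^t_i+1)$ directly. The summand is nonzero only at those rounds in which $i$ is actually selected, and if $t$ is the round at which $i$ is chosen for the $k$th time then by definition $n^t_i = k-1$, so the corresponding term equals $1/k$. Writing $N_i$ for the total number of times $i$ is chosen over the $T$ rounds, the sum therefore collapses into the harmonic sum $\sum_{k=1}^{N_i} 1/k$.

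Since each round selects exactly one index we have $N_i \le T$, and the harmonic number is bounded by $\sum_{k=1}^{N_i} 1/k \le \sum_{k=1}^{T} 1/k \le 1 + \log T$. Summing this pathwise bound over the $q$ indices yields $\sum_{t=1}^T \sum_{i=1}^q \mathbf{1}(a_t=i)/(n^t_i+1) \le q(1+\log T)$ on every path, and taking expectations delivers the claim, with the factor $q$ coming from the sum over indices and the $1+\log T$ from the harmonic bound.

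The argument is essentially deterministic once the first reduction is made; the only genuinely probabilistic step—and the one to get right—is the conditional-expectation identity that trades $p^t_i$ for $\mathbf{1}(a_t=i)$. Its validity hinges on the adaptedness conventions, namely that $p^t$ and $n^t_i$ are both measurable with respect to the information $\Fc_t$ available before the draw at round $t$. Everything after that is the standard harmonic-sum (pigeonhole) counting bound, so I anticipate no real difficulty beyond stating the measurability assumptions cleanly.
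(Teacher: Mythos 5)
Your proposal is correct and follows essentially the same route as the paper's proof: trade the probabilities $p^t_i$ for the realized selections via the tower property (the paper writes this as $\Expect_{a_t\sim p^t}(n^t_{a_t}+1)^{-1}$), regroup the time sum per index into a harmonic sum $\sum_{k=1}^{N_i}1/k$, and bound each by $1+\log T$. Your version is slightly more careful in making the adaptedness assumptions explicit, but the argument is the same.
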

\begin{proof}
This follows from an application of the pigeonhole principle,
\begin{align*}
  \Expect \sum_{t=1}^T \sum_{i=1}^{q} \frac{p^t_{i}}{n^t_{i} + 1}
  &=\Expect \sum_{t=1}^T \Expect_{a_t \sim p^t} (n^t_{a_t} + 1)^{-1}\\
  &=\Expect_{a_1 \sim p^1, \ldots, a_T \sim p^T} \Expect \sum_{t=1}^T (n^t_{a_t} + 1)^{-1}\\
  &= \Expect_{n_1^{T+1}, \ldots, n_q^{T+1}} \Expect \sum_{i=1}^{q} \sum_{t=1}^{n_i^{T + 1}} 1/t\\
  &\leq q \sum_{t=1}^{T} 1/t\\
  &\leq q(1+\log T).
\end{align*}

\end{proof}

\lgamesbound*
\begin{proof}
Using Jensen's inequality and then the upper bound in Equation \ref{e-VBOS-bound} we obtain
\begin{align}
\begin{split}
  \Expect^t V_{R, \Lambda}^\star 
  &= \Expect^t \min_{\y \in \Lambda} \max_{\x \in \Delta_\xdim} \y^\top R \x\\
  &\leq \min_{\y \in \Lambda}\Expect^t \max_{\x \in \Delta_\xdim} \y^\top R \x\\
  &= \min_{\y \in \Lambda}\Expect^t \max_{j} (R^\top \y)_j\\
  &\leq  \min_{\y \in \Lambda, \tau \geq 0} \max_{\x \in \Delta_\xdim} \sum_{j=1}^\xdim \x_j(\y^\top \Expect^t R_j + \tau_j \cgf^t_j(\y / \tau_j) - \tau_j \log \x_j)\\
  &= \max_{\x \in \Delta_\xdim} \min_{\y \in \Lambda, \tau \geq 0} \sum_{j=1}^\xdim \x_j(\y^\top \Expect^t R_j + \tau_j \cgf^t_j(\y / \tau_j) - \tau_j \log \x_j)\\
  &= \max_{\x \in \Delta_\xdim} \VBOSfn_{\phi, \Lambda}(\pi),
\end{split}
\end{align}
where we could swap the min and max using the minimax theorem.
\end{proof}

\lgamesbr*
\begin{proof}
This is a straightforward extension of the techniques in Theorem \ref{thm-brbound}. First, let us denote by
\[
\saddle_R^t(\pi,\lambda,\tau) = \sum_{j=1}^\xdim\pi_j(\lambda^\top \Expect^t R_j + \tau_j \cgf^t_j(\lambda / \tau_j) - \tau_j \log \x_j).
\]
Since the prior and noise terms are $1$-sub-Gaussian for each entry of $R$, we can bound the cumulant generating function of $R_j$ at time $t$
as
\begin{equation}
\label{e-asssubg-game}
  \cgf^t_{j}(\beta) \leq \sum_{i=1}^m \frac{\beta_i^2}{2 (n^t_{ij} +1)},
\end{equation}
where $n^t_{ij}$ is the number of observations of entry $i,j$ before time $t$.
Then, using Lemma \ref{l-games-bound}
\begin{align*}
  {\rm BayesRegret}(\phi, {\rm alg}, T) &= \Expect \sum_{t=1}^T (V_{R, \Lambda}^\star - r_t) \\
  &= \Expect \sum_{t=1}^T \Expect^t(V_{R, \Lambda}^\star - r_t) \\
  &\leq \Expect \sum_{t=1}^T \left(\min_{\lambda \in \Lambda,\tau \geq 0} \saddle_R^t(\pi^t,\lambda,\tau) - (\lambda^t)^\top (\Expect^t R) \pi^t \right)\\
  &\leq \Expect \sum_{t=1}^T \left(\min_{\tau}  \saddle_R^t(\pi^t,\lambda^t,\tau) - (\lambda^t)^\top (\Expect^t R) \pi^t \right)\\
  &=\sum_{t=1}^T\sum_{j=1}^\xdim \pi_j^t \min_{\tau_j} ( \tau_j \cgf^t_j(\lambda^t / \tau_j) - \tau_j \log \pi^t_j)\\
  &\leq \sum_{t=1}^T\sum_{j=1}^\xdim \pi_j^t \min_{\tau_j} \left(\sum_{i=1}^m \frac{(\lambda^t_{i})^2}{2\tau_j(n_{ij}^t + 1)} - \tau_j \log \pi^t_j\right)\\
  &= \sum_{t=1}^T\sum_{j=1}^\xdim \pi_j^t \sqrt{\sum_{i=1}^m \frac{- 2 (\lambda_i^t)^2 \log \pi_j^t}{(n^t_{ij} + 1)}}\\
  &\leq  \sqrt{\sum_{t=1}^TH(\pi^t)} \sqrt{2 \sum_{t=1}^T \sum_{i,j} \frac{\lambda_i^t \pi_j^t}{(n^t_{ij} + 1)}}\\
  &\leq  \sqrt{2 m \xdim T (\log \xdim )(1 + \log T)},\\
\end{align*}
where we used the sub-Gaussian bound Eq.~\eqref{e-asssubg-game}, Cauchy-Schwarz, the fact that $\lambda^t$ is a probability distribution adapted to $\Fc_t$, and the pigeonhole principle Lemma \ref{l-pigeonhole}.
\end{proof}

\lregretconstrained*
\begin{proof}
Let $\lambda^{t\star} = \argmin_{\lambda \in \Lambda} \lambda^\top R \pi_t$, which exists for any fixed $R$ since the set $\Lambda$ is compact and the objective function is linear. Note that $R$ is a random variable, and so $\lambda^{t\star}$ is also a random variable for all $t$, and note that the reward we defined at time $t$ is given by $r_t = (\lambda^{t\star})^\top R \pi_t$. Let us denote by
\[
\saddle_R^t(\pi,\lambda,\tau) = \sum_{j=1}^\xdim\pi_j(\lambda^\top \Expect^t R_j + \tau_j \cgf^t_j(\lambda / \tau_j) - \tau_j \log \x_j).
\]
Since the prior and noise terms are $1$-sub-Gaussian for each entry of $R$, we can bound the cumulant generating function of $R_j$ at time $t$
as
\begin{equation}
\label{e-asssubg-constrained}
  \cgf^t_{j}(\beta) \leq \frac{\|\beta\|^2}{2 (n^t_{j} +1)},
\end{equation}
where $n^t_{j}$ is the number of observations of column $j$ before time $t$ (the agent observes outcomes from all rows, hence the use of $n_j^t$ rather than $n_{ij}^t$).
Now with the definition of the Bayesian regret and using Lemma \ref{l-games-bound}
\begin{align*}
  \bayesregret(\phi, {\rm alg}, T, \mu) &= \Expect \sum_{t=1}^T (V_{R, \Lambda}^\star - r_t) \\
  &\leq \Expect \sum_{t=1}^T \left(\min_{\lambda \in \Lambda,\tau \geq 0} \saddle_R^t(\pi^t,\lambda,\tau) - \Expect^t ((\lambda^{t\star})^\top R \pi_t) \right)\\
    &\leq \Expect \sum_{t=1}^T \left(\min_{\tau}  \saddle_R^t(\pi^t,\Expect^t \lambda^{t\star},\tau) - \Expect^t ((\lambda^{t\star})^\top R \pi^t) \right).
\end{align*}
Now we write the last line above as
\[
\Expect \sum_{t=1}^T  \left(\min_{\tau} \Lc(\pi^t, \Expect^t \lambda^{t\star}, \tau) - (\pi^t)^\top \Expect^t R \Expect^t \lambda^{t\star}
+ (\pi^t)^\top \Expect^t R \Expect^t \lambda^{t\star}
  - (\pi^t)^\top \Expect^t (R \lambda^{t\star})  \right),
\]
which we shall bound in two parts. First, we use the standard approach we have used throughout this manuscript. Using Eq.~\eqref{e-asssubg-constrained}.
\begin{align*}
  \Expect \sum_{t=1}^T \left(\min_{\tau} \Lc(\pi^t, \Expect^t \lambda^{t\star}, \tau) - (\pi^t)^\top \Expect^t R \Expect^t \lambda^{t\star}\right)
  &= \Expect \sum_{t=1}^T \sum_{i=1}^\xdim \pi^t_i \min_{\tau_i \geq 0} \left(\tau_i \cgf_i^t(\Expect^t \lambda^{t\star} / \tau_i) - \tau_i \log \pi_i^t \right)\\
  &\leq \Expect \sum_{t=1}^T \sum_{i=1}^\xdim \pi^t_i \min_{\tau_i \geq 0} \left(\frac{\| \Expect^t \lambda^{t\star}\|_2^2}{2 \tau_i (n_i^t + 1)} - \tau_i \log \pi_i^t \right)\\
  &= \Expect \sum_{t=1}^T \sum_{i=1}^\xdim \pi^t_i \sqrt{\frac{2 (- \log \pi^t_i)\| \Expect^t \lambda^{t\star}\|_2^2 }{n_i^t + 1}}\\
  &\leq C\Expect \sqrt{2 \left(\sum_{t=1}^T H(\pi^t) \right)\left(\sum_{t=1}^T\sum_{i=1}^\xdim \frac{\pi^t_i}{n_i^t + 1} \right)}\\
  &\leq C \sqrt{2 T \xdim \log \xdim (1 + \log T)},
\end{align*}
where we used the sub-Gaussian property Eq.~\eqref{e-asssubg-constrained}, Cauchy-Schwarz, the fact that $H(\pi^t) \leq \log \xdim$, and the fact that $\|\lambda^{t\star}\| \leq C$ almost surely which implies that $\|\Expect^t \lambda^{t\star}\| \leq C$, due to Jensen's inequality.

Before we bound the remaining term
observe that if zero-mean random variable $X:\Omega\rightarrow \reals$ is $\sigma$-sub-Gaussian, then the variance of $X$ satisfies $\var X \leq \sigma^2$, which is easily verified by a Taylor expansion of the cumulant generating function. Since the prior and noise terms are $1$-sub-Gaussian for each entry of $R$ this implies that
\begin{equation}
\label{e-var-rij}
\var^t R_{ij} \leq (n_{i}^t + 1)^{-1},
\end{equation}
where $\var^t$ is the variance conditioned on $\Fc_t$ and as before $n_i^t$ is the number of times column $i$ has been selected by the agent before time $t$.
We bound the remaining term as follows
\begin{align*}
\Expect \sum_{t=1}^T  \left((\pi^t)^\top \Expect^t R\Expect^t\lambda^{t\star}-(\pi^t)^\top \Expect^t (R \lambda^{t\star}) \right)
&= \Expect\sum_{t=1}^T  (\pi^t)^\top \Expect^t ((\Expect^t R - R)\lambda^{t\star})\\
&= \Expect\sum_{t=1}^T  \sum_{i=1}^\xdim \pi^t_i \Expect^t((\Expect^t R_i - R_i)^\top \lambda^{t\star})\\
&\leq \Expect\sum_{t=1}^T  \sum_{i=1}^\xdim \pi^t_i \sqrt{\Expect^t \|R_i - \Expect^t R_i\|_2^2 \Expect^t \|\lambda^{t\star}\|_2^2}\\
&\leq C \Expect\sum_{t=1}^T  \sum_{i=1}^\xdim \pi^t_i \sqrt{\sum_{j=1}^m \var^t R_{ij}}\\
&\leq C \Expect\sum_{t=1}^T  \sum_{i=1}^\xdim \pi^t_i \sqrt{m / (n_{i}^t + 1)}\\
&\leq 2 C \sqrt{T \xdim m},
\end{align*}
where we used Cauchy-Schwarz, the fact that $\|\lambda^{t\star}\| \leq C$ almost surely, the sub-Gaussian bound on the variance of $R_{ij}$ from Eq.~\eqref{e-var-rij}, and a pigeonhole principle. Combining the two upper bounds we have
\begin{align*}
  \bayesregret(\phi, {\rm alg}, T) &\leq  C \left(\sqrt{2 \log \xdim(1+ \log T)} +2 \sqrt{m} \right) \sqrt{\xdim T}.
\end{align*}
\end{proof}

\section{Compute requirements}
All experiments were run on a single 2017 MacBook Pro.

\end{document}